\newcommand{\cmark}{\ding{51}}%
\newcommand{\xmark}{}%\ding{55}}%
\theoremstyle{plain}
\newtheorem{theorem}{Theorem}[section]
\newtheorem{proposition}[theorem]{Proposition}
\theoremstyle{definition}
\newtheorem{definition}[theorem]{Definition}
\theoremstyle{remark}
\newtheorem{remark}[theorem]{Remark}
\newcommand{\norm}[1]{\left\lVert#1\right\rVert}
\newcommand{\sprod}[2]{\langle #1, #2 \rangle}
\newcommand{\iid}[0]{\overset{\mathrm{iid}}{\sim}}
\definecolor{light}{RGB}{240, 240, 240} % RGB values 
\newtcolorbox{lightgraybox}{
    enhanced,
    frame hidden,
    sharp corners,
    colback=light,
    borderline={3pt}{-3pt}{light},
}
\newtcolorbox{floatinglightgraybox}{
    enhanced,
    frame hidden,
    sharp corners,
    colback=light,
    borderline={3pt}{-3pt}{light},
    float,
    floatplacement=t,
}
\definecolor{cornflowerblue}{rgb}{0.34, 0.51, 0.82}
\title{Fixed-Point RNNs:\\ Interpolating from Diagonal to Dense}
\author{\textbf{Sajad Movahedi\thanks{Equal contribution.}~~$^{1, 2}$, Felix Sarnthein\footnotemark[1]~~$^{1, 2}$, Nicola Mu\c{c}a Cirone$^{3}$, Antonio Orvieto$^{1, 2}$}\\
    $^{1}$ELLIS Institute Tuebingen, $^{2}$Max Planck Institute for Intelligent Systems, 
    % \\$^{3}$Department of Computer Science, ETH Zurich, 
    \\$^{3}$Department of
Mathematics, Imperial College London\\
    \texttt{\{sajad.movahedi, felix.sarnthein\}@tue.ellis.eu}\\
}
\begin{document}

\maketitle

\begin{abstract}
    Linear recurrent neural networks (RNNs) and state-space models (SSMs) such as Mamba have become promising alternatives to softmax-attention as sequence mixing layers in Transformer architectures. Current models, however, do not exhibit the full state-tracking expressivity of RNNs because they rely on channel-wise (i.e. diagonal) sequence mixing. In this paper, we investigate parameterizations of a large class of dense linear RNNs as fixed-points of parallelizable diagonal linear RNNs. The resulting models can naturally trade expressivity for efficiency at a fixed number of parameters and achieve state-of-the-art results on the state-tracking benchmarks $A_5$ and $S_5$, while matching performance on copying and other tasks.
\end{abstract}

\begin{wrapfigure}[22]{R}{0.5\linewidth}
    \vskip -0.2in
    \centering
    \centerline{\includegraphics[width=1.0\linewidth]{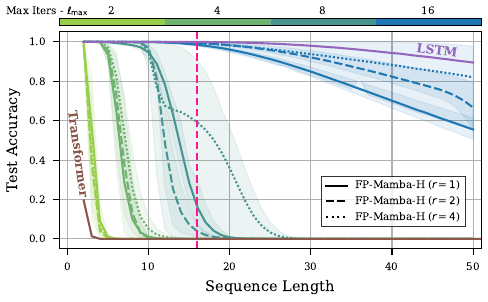}}
    \caption{\textit{
        Sequence length generalization at training length 16 (pink) for state-tracking on $A_5$, with Transformer (brown) and LSTM (purple) as lower/upper bounds. Our Fixed-Point RNN (FP-Mamba-H) is trained at different maximum number of fixed-point iterations $\ell_{\text{max}}$: between $2$ (green) and $16$ (blue). \textbf{Increasing the number of fixed-point iterations allows the linear RNN to interpolate from diagonal to dense in a few iterations.}
    }} \label{interpolation.pdf}
    \vskip -0.2in
\end{wrapfigure}

\section{Introduction} \label{sec:dense-state}

State-space models~(SSMs) and other new efficient recurrent token mixers are becoming a popular alternative to softmax attention in language modeling~\citep{DBLP:journals/corr/abs-2312-00752} as well as in other applications such as vision~\citep{DBLP:conf/nips/LiuTZYX0YJ024} and DNA processing~\citep{nguyen2024sequence}. Inspired by linear input-controlled filtering, these models can be expressed as carefully parametrized linear recurrent neural networks (RNNs) with input-dependent, diagonal state transition: 
\begin{align}
    \mathbf{h}_t=\text{diag}(\mathbf{a}_t) \mathbf{h}_{t-1}+\mathbf{B}_t\mathbf{x}_t
    \label{eq:intro_rnn}
\end{align}
Compared to classical RNNs such as LSTMs \citep{hochreiter1997long}, in Eq.~\eqref{eq:intro_rnn} the relation between the previous hidden state $\mathbf{h}_{t-1}$ and the current $\mathbf{h}_{t}$ is linear and its coefficient $\mathbf{a}_t$ does not depend on the hidden states. These choices allow SSMs such as Mamba~\citep{DBLP:journals/corr/abs-2312-00752} to be computed through efficient parallel methods during training. Furthermore, they are easier to optimize than classical RNNs, thanks to stable and efficient reparametrizations available for diagonal transitions~\citep{orvieto2023resurrecting,zucchet2024recurrent} -- techniques that are significantly more difficult to apply effectively in the classical setting~\citep{DBLP:conf/icml/ArjovskySB16, pmlr-v80-helfrich18a}. At test time, they are faster than classical Transformers on long sequences due to their recurrent nature. 

Though modern linear RNNs have shown promise in practice, recent theoretical studies suggest that using dense, input-dependent transition matrices~(i.e. replacing $\text{diag}(\mathbf{a}_t)$ with a dense $\mathbf{A}_t$) could present an opportunity to improve expressivity and unlock performance on challenging tasks. In particular, \citet{DBLP:journals/corr/abs-2402-19047} prove that dense selective SSMs are endowed with the theoretical expressivity of classical non-linear RNNs such as LSTMs. As shown by \citet{DBLP:conf/icml/MerrillPS24} and \citet{sarrof2024the}, such gained expressivity proves to be particularly useful in state-tracking applications where models are expected to maintain and extrapolate a complex state of the world. Since state-tracking is naturally expressed by non-linear RNNs but provably unavailable to channel-wise sequence mixers such as SSMs or Transformers,~\citet{merrill-sabharwal-2023-parallelism} speculate on a \textit{fundamental tradeoff between parallelism and expressivity}. This discussion sparked interest in non-diagonal recurrences and parallelizable architectures capable of state-tracking~\citep{DBLP:journals/corr/abs-2411-12537, terzic2025expressivenesSSM, schöne2025implicitlanguagemodelsrnns, peng2025rwkv7gooseexpressivedynamic, siems2025deltaproduct}.

When designing new architectures involving dense selective yet \textit{linear} state transitions of the form $\mathbf{h}_t= \mathbf{A}_t \mathbf{h}_{t-1}+\mathbf{B}_t\mathbf{x}_t$, two fundamental concerns arise:
\begin{enumerate}[itemsep=0pt, leftmargin=*]
    \item What should the parametric form for $\mathbf{A}_t$, as a function of the input be? How can we guarantee this parametrization induces a stable recurrence, like in standard\footnote{Standard SSMs are diagonal and operate in polar coordinates, parameterizing directly the gap between eigenvalues and the stability threshold~\citep{orvieto2023resurrecting}. This technique allows to increasing granularity near the identity, and to effectively normalize the forward pass~(cf. $\sqrt{1-|\gamma|^2}$ term in Griffin~\citep{DBLP:journals/corr/abs-2402-19427}).} SSMs?
    \item How does a parametrization balance between expressivity and parallelism? Which assumptions on the structure of $\mathbf{A}_t$ enable efficient computation, and how do they  interact with expressivity?
\end{enumerate}

Perhaps the first approach tackling the above questions was DeltaNet~\citep{schlag2021linear, DBLP:journals/corr/abs-2406-06484} with a block-diagonal and orthogonal therefore, stable state transition structure, where each block is parametrized by a Householder matrix. The parallelizable algorithm, was then extended to include negative eigenvalues~\citep{DBLP:journals/corr/abs-2411-12537}, gates~\citep{yang2025gateddeltanet}, and most recently products of Householders~\citep{siems2025deltaproduct}. Such choices, leading to increased expressivity as exemplified by their state-tracking and length generalization capabilities, are motivated mainly by hardware considerations: Householder-based mixing can be implemented efficiently on GPUs as linear attention via WY-representations and the UT transform \citep{DBLP:journals/corr/abs-2406-06484}.

\begin{wrapfigure}[18]{R}{0.5\linewidth}
    \vskip -5mm
    \centering
    \includegraphics[width=1.0\linewidth]{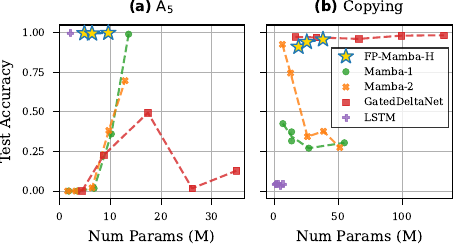}
    \caption{\textit{
        \textbf{(a)} State-tracking on $A_5$ at sequence length $16$, and \textbf{(b)} character accuracy of copying at $2\times$ sequence length generalization, trained on lengths $\in [5, 50]$. Our single layer FP-Mamba-H with mixer reflections  $r\in \{1,2,4\}$ is compared to baselines of increasing depth $\in \{1,2,4,6,8\}$. \textbf{FP-Mamba-H is the only model capable of solving both the state-tracking and the copy task.}
    }} \label{fig:params}
\end{wrapfigure}

While the works above offer exciting practical strategies for boosting capabilities at a relatively low additional computational cost, they fall short in exploring the sea of intriguing options for dense transitions and hence, in thoroughly answering questions (1) and (2) above. 

Unfortunately, this is not an easy task: although linear recurrences are theoretically parallelizable across sequence length~\citep{DBLP:conf/iclr/MartinC18}, parallelizing dense RNNs efficiently is not trivial due to increased memory I/O. These thoughts inspired us to change our viewpoint: instead of designing an algorithm which adds a fraction of non-diagonal processing to a model, here, we look for a strategy to navigate the \textit{parallelism tradeoff} towards a truly dense object.

Motivated by the idea of designing a parallelizable general-purpose method to implement new dense RNN variations, in this paper we devise a new adaptive computation strategy which allows to interpolate between fast recurrent diagonal RNNs and dense recurrences with arbitrary preselected structure. Instead of parameterizing the dense RNN layer as an \emph{explicit} function $\mathbf{h} = F_{\boldsymbol{\theta}}(\mathbf{x})$, we build on the literature of equilibrium/implicit models \citep{DBLP:conf/nips/BaiKK19, DBLP:journals/simods/GhaouiGTAT21} to parametrize it \emph{implicitly} as a solution $\mathbf{h}^*$ to a fixed-point equation $\mathbf{h} = f_{\boldsymbol{\theta}}(\mathbf{x}, \mathbf{h})$ involving only a diagonal RNN. As described in Fig.~\ref{fig:fp-rnn-overview}, we solve for $\mathbf{h}^*$ using a fixed-point iteration of diagonal RNN evaluations $f_{\boldsymbol{\theta}}$.

A fundamental question some readers might rightfully ask, is the following: ``\textit{what is the advantage of iterating a single layer in depth compared to depth-stacking multiple SSM, e.g. Mamba layers?}'' We claim one advantage comes from having access to the limiting dense object.  As showcased by Fig.~\ref{fig:params}, this allows to adaptively provide the required expressivity for a fixed set of parameters without any a priori choice on the network size. 

\paragraph{Summary.} In this work, we propose a recipe to design a general class of dense linear RNNs as fixed points of corresponding diagonal linear RNNs. Our contributions are:
\begin{enumerate}\setlength\itemsep{0em}
    \item We develop the framework of Fixed-Point RNNs to adaptively trade parallelism for expressivity using the number of fixed-point iterations (Fig.~\ref{interpolation.pdf}). 
    \item We achieve a stable parametrization of a dense RNN via a carefully designed diagonal RNN. 
    \item The framework allows for easy integration of both non-linear hidden state dependence and linear attention based matrix-valued formulations. This way, our FP-Mamba unites previously isolated capabilities of recurrent computation and memory (Fig.~\ref{fig:params}). 
\end{enumerate}

\section{Background} \label{sec:background}

Since their introduction~\citep{rumelhart1986sequential, elman1990finding}, RNNs have significantly contributed to the evolution of machine learning methods for sequential data \citep{hochreiter1997long, jaeger2001echo}. But despite their theoretical promise of Turing-completeness \citep{siegelmann1992computational}, recurrent models fell out of fashion due to two significant challenges: they are inherently sequential, and notoriously difficult to train~\citep{hochreiter2001gradient, pascanu2013difficulty}. The recent advancements of linear RNNs~\citep{DBLP:journals/corr/abs-2312-00752} suggest a way forward to combine the scalability of Transformers~\citep{vaswani2017attention} with the expressivity of classical RNNs~\citep{DBLP:journals/corr/abs-2402-19047}. The key challenge here is the stable and efficient parametrization of a linear RNN layer with a time-varying recurrent transition matrix. In this paper we are exploring first steps towards this goal.

\paragraph{Dense Selective RNN.} \label{sec:dense-selective-rnn}
%\subsection{Parameterizing a Dense Selective RNN.} \label{sec:dense-selective-rnn}
Traditionally, RNNs are parametrized as either time-invariant, non-linear, or element-wise system. To the best of our knowledge, a time-variant, dense, and linear RNN parametrization has been of mild interest at best. To understand why, consider the general form
\begin{align} \label{eqn:dense-linear-RNN}
    F_{\boldsymbol{\theta}}: \mathbf{x} \mapsto \mathbf{h}, &&
    \mathbf{h}_t=\mathbf{A}_t \mathbf{h}_{t-1}+\mathbf{B}_t\mathbf{x}_t,
\end{align}
where $\mathbf{A}_t\in\mathbb{R}^{d\times d}$ corresponds to the time-varying state transition matrix, $\mathbf{B}_t\in\mathbb{R}^{d\times d}$ is the input transformation matrix, $\mathbf{h}_t\in\mathbb{R}^{d}$ denotes the hidden state, and $\mathbf{x}_t\in\mathbb{R}^{d}$ is the input for $t<T$ steps. For a given sequence of $\mathbf{A}_t$, the complexity of a forward pass is $O(Td^2)$ in memory and $O(T)$ sequential steps. Although such a linear RNN could also be computed in $O(\log T)$ sequential steps using a parallel scan algorithm~\citep{DBLP:conf/iclr/MartinC18}, this would require materializing matrix-matrix multiplications at cost $O(d^3)$. An issue in both scenarios, however, is the parametrization of $\mathbf{A}_t$ as time-varying, i.e. input- or even hidden state-dependent matrices. In general, this requires a map $\mathcal{M}: d \mapsto d^2$ with potentially $d\times d^2$ parameters, and $O(Td^3)$ time complexity. While structured dense matrix representations for $\mathbf{A}_t$ could potentially present a remedy, they come with additional challenges: \textbf{(1)} In order to guarantee expressivity, the $\mathbf{A}_t$ cannot be co-diagonalizable such as for example Toeplitz matrices~\citep{DBLP:journals/corr/abs-2402-19047}. \textbf{(2)} In order to guarantee stability of the dynamical system, the spectral radius $\rho(\mathbf{A}_t)$ needs to be less than, but still close to $1$ for long-range interactions~\citep{orvieto2023resurrecting}. \textbf{(3)} The matrix structure needs to be closed under multiplications to enable parallel scans without having to materialize dense representations at $O(Td^2)$ memory cost.

\paragraph{Related Works.} Improving the trainability of classical non-linear RNNs has a long history. For example, \citet{DBLP:conf/icml/ArjovskySB16} and \cite{pmlr-v80-helfrich18a} investigate parameterizations to stabilize their spectral radius with structured matrix representations, while \citet{DBLP:conf/iclr/LimZSK24} and \citet{DBLP:conf/nips/GonzalezWSL24} propose iterative methods to parallelize their computation. In this work, however, we focus on stabilizing and parallelizing a time-variant, dense, linear RNN. Improving the limited expressivity of existing diagonal linear RNNs is the focus of a few recent works, e.g.\ by~\citet{DBLP:journals/corr/abs-2411-12537} and~\citet{siems2025deltaproduct}. In contrast, we investigate a wide class of structured parameterizations for dense RNNs where the additional cost is adaptively chosen depending on the task. In concurrent work, \citet{schöne2025implicitlanguagemodelsrnns} propose an iterative method similar to ours, but as opposed to our carefully designed implicit dense RNN layer, they focus on scaling implicit causal models of existing multi-layer architectures on language. For a more extensive literature review, we refer the reader to App.~\ref{app:literature-review}.

\section{Fixed-Points as an RNN Layer}\label{sec:fixed-point-rnn}

In this section, we introduce an implicit parameterization for a family of dense RNNs $F_{\boldsymbol{\theta}}(\mathbf{x})$ which describes its output by a solution $\mathbf{h}^* \in \mathbb{R}^{T \times d}$ to the fixed-point equation $\mathbf{h}=f_{\boldsymbol{\theta}}(\mathbf{x},\mathbf{h})$ (Sec.~\ref{sec:explicit-to-implicit}). Then, we discuss how to find the solution $\mathbf{h}^*$ using fixed-point iterations (Sec.~\ref{sec:fp-iteration}) and the algorithmic implications (Sec.~\ref{sec:algorithmic-implications}) of the FP-RNN framework in light of the challenges outlined in Sec.~\ref{sec:dense-selective-rnn}. Finally, we briefly touch on how to train an implicitly dense model $F_{\boldsymbol{\theta}}(\mathbf{x})$ with gradient descent (Sec.~\ref{sec:optimizing-fp-rnns}).

\subsection{From Explicit to Implicit Parameterization}\label{sec:explicit-to-implicit}

We start by designing a diagonal RNN $f_{\boldsymbol{\theta}}(\mathbf{x},\mathbf{h})$ such that the solution $\mathbf{h}^*$ to its fixed-point equation $\mathbf{h}=f_{\boldsymbol{\theta}}(\mathbf{x},\mathbf{h})$ implicitly represents a dense RNN $\mathbf{h}^* = F_{\boldsymbol{\theta}}(\mathbf{x})$. Consider the factorized parametrization of $\mathbf{A}_t$ similar to the one introduced by \citet{pmlr-v80-helfrich18a} for non-linear and time-invariant RNN:
\begin{align} \label{eqn:dense-factorized-linear-RNN}
    F_{\boldsymbol{\theta}}: \mathbf{x} \mapsto \mathbf{h}^*, &&
    \mathbf{h}^*_t=\mathbf{Q}^{-1}_t \mathbf{\Lambda}_t \mathbf{h}^*_{t-1}+\mathbf{B}_t\mathbf{x}_t.
\end{align}
Separating $\mathbf{A}_t$ into a diagonal matrix $\mathbf{\Lambda}_t\in\mathbb{R}^{d\times d}$ and a non-diagonal invertible mixing matrix $\mathbf{Q}_t\in\mathbb{R}^{d\times d}$ allows to describe $\mathbf{h}^*$ by only a diagonal transition $\mathbf{\Lambda}_t$ by reformulating Eq.~\ref{eqn:dense-factorized-linear-RNN} to
\begin{align}  \label{eqn:fixed-point-equation}
    %\mathbf{Q}_th_t^*&=\mathbf{\Lambda}_t h_{t-1}^*+\mathbf{Q}_t\mathbf{B}_tx_t,
    \mathbf{h}_t^* &= \mathbf{\Lambda}_t \mathbf{h}_{t-1}^* + \mathbf{Q}_t \mathbf{B}_t \mathbf{x}_t + (\mathbf{I} -\mathbf{Q}_t) \mathbf{h}_t^*.
\end{align}
This means that the states $\mathbf{h}^*=F_{\boldsymbol{\theta}}(\mathbf{x})$ of the \textit{dense linear RNN} can be implicitly described by the fixed-point $\mathbf{h}^*=f_{\boldsymbol{\theta}}(\mathbf{x}, \mathbf{h}^*)$ of a corresponding \textit{diagonal linear RNN} of the following form:
\begin{align}\label{eqn:diagonal-linear-rnn}
    f_{\boldsymbol{\theta}}: (\mathbf{x},\mathbf{h}) \mapsto \mathbf{h}', &&
    \mathbf{h}'_t = \mathbf{\Lambda}_t \mathbf{h}'_{t-1} + \mathbf{Q}_t \mathbf{B}_t \mathbf{x}_t + (\mathbf{I} -\mathbf{Q}_t) \mathbf{h}_t.
\end{align}
In other words, if we could find the fixed-point $\mathbf{h}^*=f_{\boldsymbol{\theta}}(\mathbf{x}, \mathbf{h}^*) \in \mathbb{R}^{T \times d}$ for the diagonal RNN defined in Eq.~\ref{eqn:diagonal-linear-rnn}, then $\mathbf{h}^*$ would describe the states of a corresponding dense RNN $\mathbf{h}^*=F_{\boldsymbol{\theta}}(\mathbf{x})$. Motivated by this insight, in Sec.~\ref{sec:fp-iteration} we carefully parametrize the diagonal RNN $f_{\boldsymbol{\theta}}(\mathbf{x},\mathbf{h})$ and its \textit{channel mixer} $\mathbf{Q}_t$ such that a computable fixed-point exists.

\begin{figure*}[t!]
    \centering
    \begin{subfigure}[t]{0.3\textwidth}
        \centering
        \includegraphics[width=\linewidth]{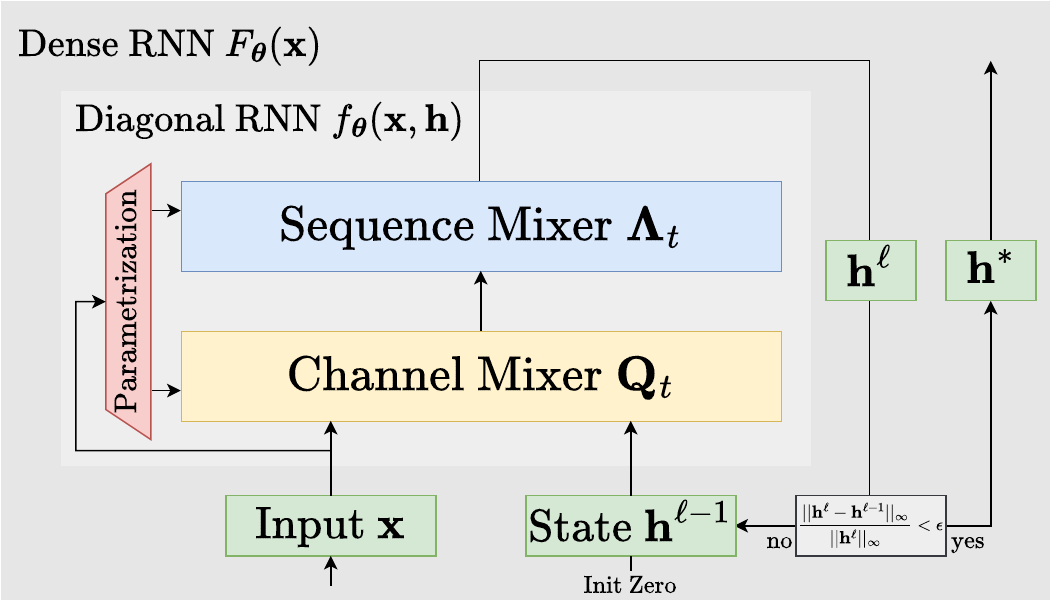}
        \caption{Framework}
        \label{fig:fp-rnn-overview}
    \end{subfigure}%
    \hfill
    \begin{subfigure}[t]{0.3\textwidth}
        \centering
        \includegraphics[width=\linewidth]{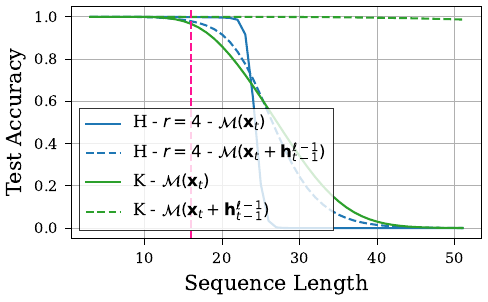}
        \caption{State Tracking on $A_5$}
        \label{fig:fp-rnn-mixer-variants}
    \end{subfigure}
    \hfill
    \begin{subfigure}[t]{0.3\textwidth}
        \centering
        \includegraphics[width=\linewidth]{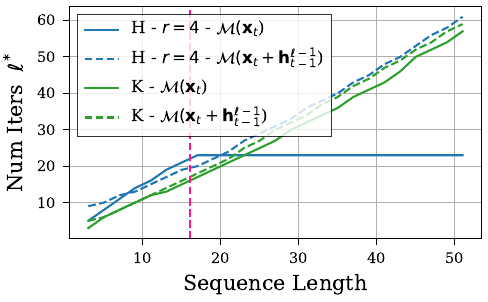}
        \caption{Number of Iterations $\ell^*$}
        \label{fig:fp-rnn-number-iters}
    \end{subfigure}
    \caption{\textit{\textbf{(a)} An overview of the proposed Fixed-Point RNN framework in Sec.\ \ref{sec:fixed-point-rnn}. A diagonal RNN $f_{\boldsymbol{\theta}}$ consisting of a sequence mixer $\mathbf{\Lambda}_t$ and a channel mixer $\mathbf{Q}_t$ is iterated until convergence towards the hidden states of an implicitly dense RNN $F_{\boldsymbol{\theta}}$.
    \textbf{(b)} FP-RNN variants with channel mixer introduced in Sec.~\ref{sec:parametrization-Q-Lambda} and \ref{sec:algorithmic-implications} solve the state-tracking task $A_5$ up to various sequence lengths. \textbf{(c)} 
    FP-RNNs adapt their computation time to the difficulty of the task by varying the number of fixed-point iterations $\ell^*$ .}}
    \label{fig:fp-rnn}
\end{figure*}

\subsection{The Fixed-Point Iteration}\label{sec:fp-iteration}

Solving fixed-point equations such as $\mathbf{h}=f_{\boldsymbol{\theta}}(\mathbf{x},\mathbf{h})$, is perhaps one of the most well-studied problems in mathematics~\citep{granas2003fixed}. In the context of deep learning, the literature on Neural ODEs \citep{NEURIPS2018_69386f6b} and Deep Equilibrium Models \citep{DBLP:conf/nips/BaiKK19, DBLP:journals/simods/GhaouiGTAT21} investigates fixed-point methods for implicit parametrizations of neural networks. A straightforward, yet effective method computes the forward pass by simply rolling out the fixed-point iteration. In the context of solving $\mathbf{h}^*=f_{\boldsymbol{\theta}}(\mathbf{x},\mathbf{h}^*)$, this corresponds to introducing an iteration in depth $\mathbf{h}^{\ell}=f_{\boldsymbol{\theta}}(\mathbf{x}, \mathbf{h}^{\ell-1})$. Denoting $\ell$ as the current iteration in depth (i.e., over the layer dimension), and $t$ as the current iteration in time (i.e., over the sequence dimension), the iteration starts at $\mathbf{h}_t^0=0$ and proceeds with
\begin{equation} \label{eqn:fixed-point-iteration-in-depth}
    \mathbf{h}_t^{\ell}=\mathbf{\Lambda}_t \mathbf{h}_{t-1}^{{\ell}} + \mathbf{Q}_t \mathbf{B}_t \mathbf{x}_t + (\mathbf{I} - \mathbf{Q}_t) \mathbf{h}_t^{\ell-1}.
\end{equation}
Intuitively, this iteration mixes information with interleaved channel mixing (with $\mathbf{Q}_t$) and sequence mixing (with $\mathbf{\Lambda}_t$) until convergence towards the hidden states of an implicit dense RNN $F_{\boldsymbol{\theta}}$ (cf.~\ref{fig:fp-rnn-overview}).

The difficulty with such an iteration in \textit{depth and time} is that the recurrent dynamics could explode without proper stabilization. While the recurrence in time can be stabilized with RNN techniques ~\citep{zucchet2024recurrent} such as an input gate $\mathbf{I}-\mathbf{\Lambda}_t$, the recurrence in depth, however, could still diverge if $f_{\boldsymbol{\theta}}(\mathbf{x}, \mathbf{h})$ does not have an attracting fixed-point~\citep{granas2003fixed}.In order to design a diagonal linear RNN $f_{\boldsymbol{\theta}}(\mathbf{x},\mathbf{h})$ which is guaranteed to have an attracting fixed-point, we make use of \citet{banach1922operations}'s theorem. In our context, the theorem states that $f_{\boldsymbol{\theta}}(\mathbf{x},\mathbf{h})$ converges to a fixed-point from any initialization $\mathbf{h}^0$ if it has a Lipschitz constant $<1$ in $\mathbf{h}$. For a fixed-point RNNs with input gate $\mathbf{I}-\mathbf{\Lambda}$, we present the following theorem:

\begin{lightgraybox}
\begin{restatable}{theorem}{thmstabilizedfixedpointrnn}
    \label{thm:stabilized-fixed-point-rnn}
    Let $f_{\boldsymbol{\theta}}(\mathbf{x}, \mathbf{h})$ be the diagonal linear RNN with input-independent $\mathbf{\Lambda}$ and $\mathbf{Q}$
    \begin{align}\label{eqn:stabilized-fixed-point-rnn}
        f_{\boldsymbol{\theta}}: (\mathbf{x},\mathbf{h}) \mapsto \mathbf{h}', &&
        \mathbf{h}'_t=\mathbf{\Lambda} \mathbf{h}'_{t-1} + \left(\mathbf{I}-\mathbf{\Lambda}\right) \left(\mathbf{Q}\mathbf{B}_t \mathbf{x}_t + \left(\mathbf{I} - \mathbf{Q}\right)\mathbf{h}_t \right).
    \end{align}
    If $||\mathbf{\Lambda}||_2 < 1$ and $||\mathbf{I}-\mathbf{Q}||_2 < 1$, then $f_{\boldsymbol{\theta}}(\mathbf{x}, \mathbf{h})$ has a Lipschitz constant $<1$ in $\mathbf{h}$. Proof in App.~\ref{app:proof-fixed-point}.
\end{restatable}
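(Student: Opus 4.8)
The statement concerns only the map $\mathbf{h}\mapsto f_{\boldsymbol{\theta}}(\mathbf{x},\mathbf{h})$ for a fixed input $\mathbf{x}$, viewed on $\mathbb{R}^{T\times d}$ with the Euclidean norm; we must show its Lipschitz constant is strictly below $1$, which then plugs into the Banach argument recalled just above the theorem. The plan is to factor $f_{\boldsymbol{\theta}}(\mathbf{x},\cdot)$ into two simpler maps and bound each. Write $f_{\boldsymbol{\theta}}(\mathbf{x},\cdot)=G\circ C_{\mathbf{x}}$, where $C_{\mathbf{x}}$ is the per-timestep affine \emph{channel} step $\mathbf{h}\mapsto\mathbf{u}$ with $\mathbf{u}_t=\mathbf{Q}\mathbf{B}_t\mathbf{x}_t+(\mathbf{I}-\mathbf{Q})\mathbf{h}_t$, and $G$ is the gated diagonal \emph{sequence} step $\mathbf{u}\mapsto\mathbf{h}'$ with $\mathbf{h}'_t=\mathbf{\Lambda}\mathbf{h}'_{t-1}+(\mathbf{I}-\mathbf{\Lambda})\mathbf{u}_t$ and a fixed initial state (hence independent of $\mathbf{h}$). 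Since Lipschitz constants multiply under composition, it suffices to show $\mathrm{Lip}(C_{\mathbf{x}})\le\norm{\mathbf{I}-\mathbf{Q}}_2$ and $\mathrm{Lip}(G)\le 1$; the product is then $<1$ by hypothesis.

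The first bound is immediate: $C_{\mathbf{x}}$ is affine and its linear part acts blockwise in $t$ by the single matrix $\mathbf{I}-\mathbf{Q}$, so its operator norm on $\mathbb{R}^{T\times d}$ is exactly $\norm{\mathbf{I}-\mathbf{Q}}_2$. The substance is the claim that $G$ is \emph{non-expansive} in $\ell^2$ over the sequence. I would prove this by unrolling: since $\mathbf{\Lambda}$ is diagonal and time-independent, $\mathbf{h}'_t=(\mathbf{I}-\mathbf{\Lambda})\sum_{s=0}^{t}\mathbf{\Lambda}^{t-s}\mathbf{u}_s$ decouples over channels, so for channel $i$ we get $h'_{i,t}=\sum_{s\le t} w^{(i)}_{t,s}\,u_{i,s}$ with weights $w^{(i)}_{t,s}=(1-\lambda_i)\lambda_i^{\,t-s}$. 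In the standard diagonal-SSM regime these weights are non-negative and, being a truncated geometric series, satisfy $\sum_{s\le t} w^{(i)}_{t,s}\le 1$ and $\sum_{t\ge s} w^{(i)}_{t,s}\le 1$ — this is precisely the role of the input gate $\mathbf{I}-\mathbf{\Lambda}$. The per-channel operator is thus lower-triangular Toeplitz with all row and column $\ell^1$-sums $\le 1$, so its spectral norm is $\le 1$ by the Schur test; equivalently, apply weighted Jensen, $|h'_{i,t}|^2\le\sum_s w^{(i)}_{t,s}|u_{i,s}|^2$, then sum over $t$ and swap the order of summation to get $\sum_t|h'_{i,t}|^2\le\sum_s|u_{i,s}|^2$. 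Summing these channelwise bounds over $i$ gives $\norm{\mathbf{h}'}\le\norm{\mathbf{u}}$, i.e. $\mathrm{Lip}(G)\le 1$, and composing with the first bound finishes the proof.

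The main obstacle is exactly this non-expansiveness of $G$: a naive triangle-inequality unrolling gives only $\mathrm{Lip}(G)\le\norm{\mathbf{I}-\mathbf{\Lambda}}_2/(1-\norm{\mathbf{\Lambda}}_2)$, which can exceed $1$, so one must exploit the geometric-series cancellation between $\mathbf{\Lambda}$ and $\mathbf{I}-\mathbf{\Lambda}$ rather than bounding the two factors in isolation. This is also where the entries of $\mathbf{\Lambda}$ matter: the row/column sums telescope to $\le 1$, and the Schur/Jensen argument goes through, when $\mathbf{\Lambda}$ has entries in $[0,1)$ as in the polar parametrization of standard SSMs; I would state that explicitly and, if complex or negative eigenvalues are desired, replace the gate by a normalization such as $\sqrt{\mathbf{I}-|\mathbf{\Lambda}|^2}$ and redo the channelwise $\ell^2$ estimate. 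A minor point to treat carefully is the initial state: because it does not depend on $\mathbf{h}$, it cancels when two trajectories are subtracted, so the difference recurrence starts from $0$ and the unrolling above is exact.
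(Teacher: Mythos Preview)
Your proposal is correct and shares the paper's high-level strategy—unroll the recurrence, exploit the geometric-series cancellation between $\mathbf{\Lambda}$ and $\mathbf{I}-\mathbf{\Lambda}$, then finish with $\norm{\mathbf{I}-\mathbf{Q}}_2<1$—but the execution differs in a useful way. The paper works one timestep at a time: it writes $f_{\boldsymbol{\theta}}(\mathbf{x},\mathbf{h})_t-f_{\boldsymbol{\theta}}(\mathbf{x},\mathbf{h}')_t$ as $\sum_{\tau\le t}\mathbf{\Lambda}^{t-\tau}(\mathbf{I}-\mathbf{\Lambda})(\mathbf{I}-\mathbf{Q})(\mathbf{h}_\tau-\mathbf{h}'_\tau)$, applies a Cauchy--Schwarz-type step to pull out $\norm{\sum_\tau\mathbf{\Lambda}^{t-\tau}}_2\cdot\norm{\mathbf{I}-\mathbf{\Lambda}}_2\cdot\norm{\mathbf{I}-\mathbf{Q}}_2$, and invokes the Neumann series $\sum_{k\ge 0}\mathbf{\Lambda}^k=(\mathbf{I}-\mathbf{\Lambda})^{-1}$ to argue the first product is $\le 1$. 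Your factorization $f_{\boldsymbol{\theta}}(\mathbf{x},\cdot)=G\circ C_{\mathbf{x}}$ isolates the two contributions cleanly, and the Schur-test/Jensen argument on the lower-triangular Toeplitz weights $(1-\lambda_i)\lambda_i^{t-s}$ bounds the full $\ell^2$ sequence norm of $G$ in one shot rather than per $t$. This buys you a tighter and more transparent proof: the paper's per-$t$ inequality does not immediately sum to the sequence-norm Lipschitz bound, and its Neumann step multiplies two \emph{separate} matrix norms whose maxima may sit at different diagonal entries, whereas your channelwise row/column-sum telescopes exactly and avoids that slack. You are also more explicit than the paper about the implicit assumption $\lambda_i\in[0,1)$ needed for nonnegative weights, and about how to adapt the gate (e.g.\ $\sqrt{\mathbf{I}-|\mathbf{\Lambda}|^2}$) when that fails.
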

\end{lightgraybox}
\vspace{\parskip}

Intuitively, Thm.~\ref{thm:stabilized-fixed-point-rnn} states two conditions for stable parametrization of an implicitly dense RNN $F_{\boldsymbol{\theta}}$: \textbf{(1)} the recurrence in time needs to be coupled with input normalization and contractive (i.e. $\|\mathbf{\Lambda}\|_2<1$). \textbf{(2)} The recurrence in depth acting on $\mathbf{h}$, i.e. $(\mathbf{I}-\mathbf{Q}_t)$, needs to be contractive. Together, this guarantees that all sequences $\mathbf{h}^\ell$ up to $\mathbf{h}^*$ throughout the fixed-point iteration do not explode without any explicit assumptions on the spectral radius on $\mathbf{A}$ \citep{DBLP:conf/icml/ArjovskySB16}.

\subsection{Parametrization of $\mathbf{Q}_t$ and $\mathbf{\Lambda}_t$}\label{sec:parametrization-Q-Lambda}

To satisfy the assumptions required for expressivity in~\citep{DBLP:journals/corr/abs-2402-19047}, the implicit transition matrix $\mathbf{A}_t$ and therefore $\mathbf{\Lambda}_t$ and $\mathbf{Q}_t$ need to be input-controlled (i.e.\ selective), which could be realized through a linear mapping of the input, i.e.\ $\mathbf{Q}_t = \mathcal{M}(\mathbf{x}_t) :=  \text{reshape}(\mathbf{W_Q} \mathbf{x}_t)$. However, this presents two challenges: how can stability be guaranteed (c.f. Thm.~\ref{thm:stabilized-fixed-point-rnn}) and excessive computational cost due to the $O(d^3)$ parameters of $\mathbf{W_Q}$ be avoided? A straight-forward solution lies in structured matrix representations for both the diagonal transition matrix $\mathbf{\Lambda}_t$ and the channel mixer $\mathbf{Q}_t$.

Inspired by~\citet{pmlr-v80-helfrich18a}, we aim for $\mathbf{Q}_t$ to be approximately norm-preserving and $\mathbf{\Lambda}_t$ to control the eigenvalue scale using a parametrization akin to Mamba or Griffin \citep{DBLP:journals/corr/abs-2312-00752, DBLP:journals/corr/abs-2402-19427} and normalization $(\mathbf{I}-\mathbf{\Lambda}_t)$. For the channel mixers $\mathbf{Q}_t$, we consider the structures:
\begin{itemize}
    \item \textbf{Diagonal Plus Low Rank (DPLR):}
    $\mathbf{Q}_t = \mathcal{M}(\mathbf{x}_t) := \left(\mathbf{I} - \sum_{i=1}^r\alpha_{it} \cdot \mathbf{\bar{u}}_{it}\mathbf{\bar{u}}_{it}^\top\right)$, for rank $r$.
    \item \textbf{Householder Reflections (H):} 
    $\mathbf{Q}_t = \mathcal{M}(\mathbf{x}_t) := \prod_{i=1}^r \left(\mathbf{I}-\alpha_{it} \cdot \mathbf{\bar{u}}_{it}\mathbf{\bar{u}}_{it}^\top\right)$, for $r$ reflections.
    \item \textbf{Kronecker (K):} $\mathbf{Q}_t=\mathcal{M}(\mathbf{x}_t):=\mathbf{I}-(\mathbf{\bar{K}}_t^{1}\otimes \mathbf{\bar{K}}_t^2)$, where $\otimes$ denotes the Kronecker product.
\end{itemize}
This allows to reduce the size of the input-dependent parameters $\alpha_{it}$, $\mathbf{\bar{u}}_{it}$, and $\mathbf{\bar{K}}_t^{i}$ to $O(d)$, and consequently reduce the size of the linear map $\mathbf{W_Q}$ to $O(d^2r)$ and $O(d^2)$. In order to guarantee stability, the condition $||\mathbf{I}-\mathbf{Q}||_2 < 1$ can be enforced by scaling $\alpha_{it}$, $\mathbf{\bar{u}}_{it}$, and $\mathbf{\bar{K}}_t^{i}$ appropriately. For more details about the channel mixer variants, please refer to App.~\ref{app:fixed-point-mamba}. Fig.~\ref{fig:fp-rnn-mixer-variants}, we compare different channel mixer variants and observe that the Kronecker structure seems to be most appropriate the state-tracking task $A_5$.

\subsection{Algorithmic Implications}\label{sec:algorithmic-implications}

Recall from Sec.~\ref{sec:dense-selective-rnn} that an explicitly parametrized dense selective RNN can only be parallelized under strict assumptions on its structure and runs otherwise in $O(T)$ sequential steps. However, a parallelizable structure is given by the element-wise, diagonal transition $\mathbf{\Lambda}_t$ of a diagonal RNN~\citep{DBLP:conf/iclr/MartinC18}. Since such a diagonal RNN is called $\ell^*$-times as a subroutine of the fixed-point iteration in Eq.~\ref{eqn:fixed-point-iteration-in-depth}, a fixed-point RNN runs in $O(\ell^* \cdot\log{T})$ sequential steps. 
This means that the implicit parametrization --as opposed to explicit or non-linear parametrizations-- allows to decouple the number of sequential steps $\ell^*$ from the sequence length $T$ itself, and trade parallelism for expressivity.

This insight suggests an opportunity to introduce a non-linear computation for every sequential step, like in classical RNNs. Concretely, we investigate channel mixers $\mathcal{M}(\mathbf{x}_t+\smash{\mathbf{h}_{t-1}^{\ell-1}})$ which are a function of both the input $\mathbf{x}_t$ and the hidden state $\smash{\mathbf{h}_{t-1}^{\ell-1}}$ from the previous iteration (in both time and depth) without degrading parallelizability. In Fig.~\ref{fig:fp-rnn-mixer-variants}, we compare channel mixers with and without hidden state dependence and observe that this indeed improves sequence length generalization.

Summarizing the results so far, we arrive at an updated recurrence with hidden state dependence:
%\vspace{\parskip}
\begin{lightgraybox}
\vspace{-0mm}
\begin{equation}
    \label{eqn:fixed-point-iteration-in-depth-with-dependence}
    \mathbf{h}_t^{\ell} = \boldsymbol{\lambda}_t^\ell \odot \mathbf{h}_{t-1}^{{\ell}} 
    + (\mathbf{1} - \boldsymbol{\lambda}_t^\ell) \odot
    \big( \mathbf{Q}_t^\ell \mathbf{B}_t^\ell \mathbf{x}_t + (\mathbf{I} - \mathbf{Q}_t^\ell) \mathbf{h}_t^{\ell-1}\big), 
\end{equation}
\end{lightgraybox}
\vspace{\parskip}
where we use $\odot$ to highlight the parallelizability of the element-wise product. We would like to note that due to the normalization $(\mathbf{I}-\mathbf{\Lambda}_t)$, the corresponding dense RNN $F_{\boldsymbol{\theta}}$ is not explicitly representable anymore as discussed in App.~\ref{app:effect-of-normalization-on-matrices-A}. Furthermore, for the time-varying parametrization in Eq.~\ref{eqn:fixed-point-iteration-in-depth-with-dependence}, the convergence guarantees may be weaker and solutions $\mathbf{h}^*$ could be non-unique due to the hidden state dependence. In practice, we iterate until $\frac{||\mathbf{h}^{\ell}-\mathbf{h}^{\ell-1}||_\infty}{||\mathbf{h}^{\ell}||_\infty}<0.1$ and observe that the conditions of Thm.~\ref{thm:stabilized-fixed-point-rnn} are strong enough to reach convergence within a finite number of iterations $\ell^*$ as evidenced by Fig.~\ref{fig:fp-rnn-number-iters}. Interestingly, the model navigates the \textit{parallelism tradeoff} \citep{merrill-sabharwal-2023-parallelism} and adaptively increases its sequential computation for harder tasks.

\subsection{Optimizing Fixed-Point RNNs}\label{sec:optimizing-fp-rnns}

One advantage of converging to a fixed-point as opposed to general layer looping lies in model training. Since the gradient with respect to $\mathbf{h}^{0}$ is not needed, implicit differentiation can be used to avoid storing and backpropagating through the computational graph of the fixed-point iteration, as discussed by \citet{pmlr-v80-liao18c}, \citet{DBLP:conf/nips/BaiKK19}, and in App.~\ref{app:optimizing-fixed-point-rnns}. In practice, truncated backpropagation of the last $k$ iterations suffices to approximate the gradient through the full iteration $\mathbf{J}_{\mathbf{x}}^*  \approx  \mathbf{J}_{\mathbf{x}}(\mathbf{h}^{\ell^*-k}) \cdot \ldots \cdot \mathbf{J}_{\mathbf{x}}(\mathbf{h}^{\ell^*})$. For Fixed-Point RNNs we observe that computing the gradient only at the fixed-point ($k=0$), is enough to stabilize training. This means that compared to a single diagonal RNN layer, Fixed-Point RNNs incur no memory overhead and only sequential overhead in the forward pass but not in the backward pass.

We hypothesize that this is possible because $f_{\boldsymbol{\theta}}(\mathbf{x},\mathbf{h})$ is a mostly linear object as opposed to multi-layer implicit models such as~\citep{schöne2025implicitlanguagemodelsrnns}. Furthermore, we observe that hidden state dependence $\mathcal{M}(\mathbf{x}_t+\mathbf{h}_{t-1}^{\ell-1})$ particularly helps with gradient-based optimization. We credit this to the symmetry between the gradients w.r.t.\ $\mathbf{x}$ and $\mathbf{h}$, and formalize this in the following theorem: 
\begin{lightgraybox}
\begin{restatable}{theorem}{thmdescentdir}\label{thm:descent-dir}
    Let $f_{\boldsymbol{\theta}}(\mathbf{x}, \mathbf{h})$ have Lipschitz constant $<1$ and fixed-point $\mathbf{h}^*$.
    If the Jacobians $\frac{\partial f_{\boldsymbol{\theta}}}{\partial \mathbf{x}}(\mathbf{x}, \mathbf{h})$ and $\frac{\partial f_{\boldsymbol{\theta}}}{\partial \mathbf{h}}(\mathbf{x}, \mathbf{h})$ are equal, then the gradient $\nabla_{\boldsymbol{\theta}}\mathcal{L}\big(f_{\boldsymbol{\theta}}(\mathbf{x}, \mathbf{h}), \mathbf{y}\big)$ of the loss $\mathcal{L}(\cdot, \mathbf{y})$ for a target $\mathbf{y}$ at the fixed point $\mathbf{h}=\mathbf{h}^*$ is a descent direction of $\mathcal{L}\big( F_{\boldsymbol{\theta}}(\mathbf{x}), \mathbf{y}\big)$. 
    Proof in App.~\ref{app:proof-descent-dir}.
\end{restatable}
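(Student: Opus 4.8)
The plan is to compare the exact parameter gradient $\mathbf{g}:=\nabla_{\boldsymbol\theta}\mathcal{L}(F_{\boldsymbol\theta}(\mathbf{x}),\mathbf{y})$, obtained by differentiating through the fixed point, with the ``$k=0$'' gradient $\tilde{\mathbf{g}}:=\nabla_{\boldsymbol\theta}\mathcal{L}(f_{\boldsymbol\theta}(\mathbf{x},\mathbf{h}),\mathbf{y})\big|_{\mathbf{h}=\mathbf{h}^*}$ obtained by freezing the fixed point; that $\tilde{\mathbf{g}}$ is a descent direction of $\mathcal{L}(F_{\boldsymbol\theta}(\mathbf{x}),\mathbf{y})$ means $\langle\mathbf{g},\tilde{\mathbf{g}}\rangle\ge 0$, strictly $>0$ whenever $\mathbf{g}\neq 0$. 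Write $g:=\nabla_{\mathbf z}\mathcal{L}(\mathbf{h}^*,\mathbf{y})$ and let $J_{\mathbf h},J_{\mathbf x},J_{\boldsymbol\theta}$ be the Jacobians of $f_{\boldsymbol\theta}$ with respect to its hidden, input, and parameter arguments, all evaluated at $(\mathbf{x},\mathbf{h}^*)$. Since $f_{\boldsymbol\theta}$ is Lipschitz $<1$ in $\mathbf{h}$ we have $\|J_{\mathbf h}\|_2<1$, so $\mathbf{I}-J_{\mathbf h}$ is invertible and its symmetric part satisfies $\mathbf{I}-\tfrac12(J_{\mathbf h}+J_{\mathbf h}^\top)\succeq(1-\|J_{\mathbf h}\|_2)\,\mathbf{I}\succ 0$. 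Differentiating $\mathbf{h}^*=f_{\boldsymbol\theta}(\mathbf{x},\mathbf{h}^*)$ implicitly in $\boldsymbol\theta$ (using this invertibility) gives $\partial_{\boldsymbol\theta}\mathbf{h}^*=(\mathbf{I}-J_{\mathbf h})^{-1}J_{\boldsymbol\theta}$, hence $\mathbf{g}=J_{\boldsymbol\theta}^\top(\mathbf{I}-J_{\mathbf h})^{-\top}g$, whereas the chain rule through the frozen map gives simply $\tilde{\mathbf{g}}=J_{\boldsymbol\theta}^\top g$.

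Now I bring in the symmetry hypothesis $J_{\mathbf x}=J_{\mathbf h}$. Differentiating the fixed-point relation in $\mathbf{x}$ gives $\partial_{\mathbf x}F_{\boldsymbol\theta}=(\mathbf{I}-J_{\mathbf h})^{-1}J_{\mathbf x}=(\mathbf{I}-J_{\mathbf h})^{-1}-\mathbf{I}$, so the Neumann correction factor separating $\mathbf{g}$ from $\tilde{\mathbf{g}}$ is exactly $\mathbf{I}+\partial_{\mathbf x}F_{\boldsymbol\theta}$, the model's own input--output Jacobian; the same symmetry means the learnable weights act on $\mathbf{x}_t$ and $\mathbf{h}_{t-1}$ through identical linear paths (the $\mathbf{Q}_t,\mathbf{B}_t$ maps and the gate $\mathbf{1}-\boldsymbol\lambda_t$ in Eq.~\eqref{eqn:fixed-point-iteration-in-depth-with-dependence}), which is what will let me treat $J_{\boldsymbol\theta}$ like $J_{\mathbf h}$ in the estimate. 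Substituting $w:=(\mathbf{I}-J_{\mathbf h})^{-\top}g$ (so $g=w-J_{\mathbf h}^\top w$, $\mathbf{g}=J_{\boldsymbol\theta}^\top w$, $\tilde{\mathbf{g}}=\mathbf{g}-J_{\boldsymbol\theta}^\top J_{\mathbf h}^\top w$) gives $\langle\mathbf{g},\tilde{\mathbf{g}}\rangle=\|\mathbf{g}\|^2-w^\top J_{\mathbf h}J_{\boldsymbol\theta}J_{\boldsymbol\theta}^\top w$, and collapsing the cross term via the symmetry into the form $v^\top J_{\mathbf h}v$ with $v:=J_{\mathbf h}^\top w$ (using $w^\top J_{\mathbf h}=v^\top$) yields $\langle\mathbf{g},\tilde{\mathbf{g}}\rangle=\|v\|^2-v^\top J_{\mathbf h}v$. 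Since $v^\top J_{\mathbf h}v=v^\top\bigl(\tfrac12(J_{\mathbf h}+J_{\mathbf h}^\top)\bigr)v\le\|J_{\mathbf h}\|_2\|v\|^2$, we conclude $\langle\mathbf{g},\tilde{\mathbf{g}}\rangle\ge(1-\|J_{\mathbf h}\|_2)\|v\|^2\ge 0$, with strict inequality when $v\neq 0$; and $v=0$ forces $\mathbf{g}=0$, so $\tilde{\mathbf{g}}$ is a genuine descent direction away from stationary points.

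The main obstacle is precisely the step I compressed as ``collapsing the cross term via the symmetry'': in full generality $\langle\mathbf{g},\tilde{\mathbf{g}}\rangle=g^\top J_{\boldsymbol\theta}J_{\boldsymbol\theta}^\top(\mathbf{I}-J_{\mathbf h})^{-\top}g$, and for an arbitrary, ill-conditioned $J_{\boldsymbol\theta}$ this quadratic form can be negative even though $\|J_{\mathbf h}\|_2<1$ --- so the symmetry must really be used to tie $J_{\boldsymbol\theta}$ to $J_{\mathbf h}$, not merely to identify $J_{\mathbf x}$ with $J_{\mathbf h}$. I would therefore devote most of the proof to showing that, for the parametrization of Eq.~\eqref{eqn:fixed-point-iteration-in-depth-with-dependence} with channel mixer $\mathcal{M}(\mathbf{x}_t+\mathbf{h}_{t-1}^{\ell-1})$ and input gate applied to both summands, a perturbation of the weights factors through the same Jacobian blocks as a perturbation of $\mathbf{x}$, making the telescoping above exact; the implicit-function bookkeeping and the contraction estimate $\lambda_{\max}\bigl(\tfrac12(J_{\mathbf h}+J_{\mathbf h}^\top)\bigr)<1$ are then routine. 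A cleaner alternative is to prove the analogous claim for the input gradient $\nabla_{\mathbf x}\mathcal{L}$ first --- there $J_{\mathbf x}=J_{\mathbf h}$ makes the telescoping immediate --- and then transport it to $\nabla_{\boldsymbol\theta}$ via the chain rule through the shared activations.
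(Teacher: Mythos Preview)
Your ``cleaner alternative'' at the end is exactly what the paper does: despite the theorem being stated for $\nabla_{\boldsymbol\theta}$, the proof in the appendix works entirely with the $\mathbf{x}$-gradient. With $\boldsymbol\delta:=\nabla_{\mathbf z}\mathcal{L}(\mathbf{h}^*,\mathbf{y})$ and $J:=J_{\mathbf x}=J_{\mathbf h}$, the paper reduces the descent condition to showing that the symmetric part of $(\mathbf{I}-J)^{-1}JJ^\top$ is positive semidefinite, then uses the commutation $(\mathbf{I}-J)^{-1}J=J(\mathbf{I}-J)^{-1}$ (immediate from the Neumann series) to rewrite this as $J(\mathbf{I}-J)^{-1}J^\top$, so the quadratic form becomes $\boldsymbol\omega^\top(\mathbf{I}-J)^{-1}\boldsymbol\omega$ with $\boldsymbol\omega=J^\top\boldsymbol\delta$; finally, $\|J\|_2<1$ makes the symmetric part of $(\mathbf{I}-J)^{-1}$ positive definite. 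Your substitution $w=(\mathbf{I}-J)^{-\top}g$ and telescoping to $v^\top(\mathbf{I}-J)v$ reaches an equivalent form (with $v=J^\top w$) by a longer route; the one-line commutation identity is the more direct device.

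Your worry about the $\boldsymbol\theta$-version is well-founded, and the paper does not resolve it either. The inner product $g^\top(\mathbf{I}-J_{\mathbf h})^{-1}J_{\boldsymbol\theta}J_{\boldsymbol\theta}^\top g$ can be negative for adversarial $J_{\boldsymbol\theta}$ even with $\|J_{\mathbf h}\|_2<1$, and the hypothesis $J_{\mathbf x}=J_{\mathbf h}$ says nothing about $J_{\boldsymbol\theta}$. Your ``collapsing'' step tacitly assumes $J_{\boldsymbol\theta}J_{\boldsymbol\theta}^\top=J_{\mathbf h}J_{\mathbf h}^\top$ (needed both for $\|\mathbf g\|^2=\|v\|^2$ and for the cross term), which is neither stated nor derivable from the given hypotheses. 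The paper simply sidesteps this by proving the $\mathbf{x}$-case and leaving the transfer to parameters implicit. So your instinct to fall back on $\nabla_{\mathbf x}$ is the right one; the remaining passage from $\nabla_{\mathbf x}$ to $\nabla_{\boldsymbol\theta}$ is an unstated step in both your argument and the paper's.
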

\end{lightgraybox}

\section{Fixed-Point Mamba} \label{sec:fixed-point-mamba}

\begin{wrapfigure}[21]{R}{0.5\linewidth}
    \vspace{-1.2cm}
    \centering
    \centerline{\includegraphics[width=1.0\linewidth]{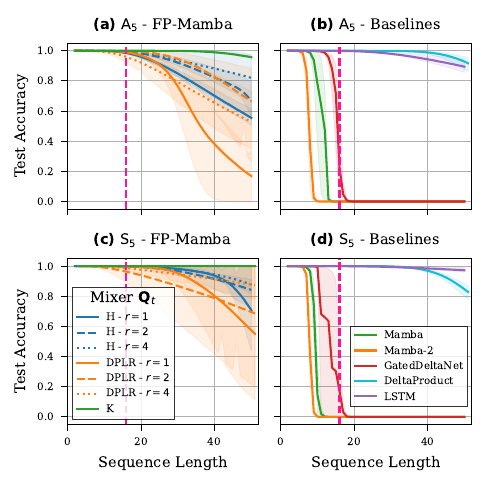}}
    \vspace{-0.1cm}
    \caption{\small \textit{Length generalization on $A_5$ \textbf{(a, c)} and $S_5$ \textbf{(b, d)} beyond the train sequence length $16$ (pink line). We compare a 1-layer FP-Mamba with mixer variants $\mathbf{Q}_t$ to baselines with 2 layers.}}
    \label{fig:fp-mamba-a5-s5}
\end{wrapfigure}

In the previous section we introduced the FP-RNN framework on a small RNN with vector hidden state. Now, we extend it to modern matrix state RNNs in Sec.~\ref{sec:memory-matrix-state} and parametrize a dense variant of Mamba~\citep{DBLP:journals/corr/abs-2312-00752} in Sec.~\ref{sec:fp-mamba-iteration}. A detailed description of the architecture is available in App.~\ref{app:fixed-point-mamba-parametrization}. We compare the architecture to the baselines Mamba~\citep{DBLP:journals/corr/abs-2312-00752}, Mamba-2~\citep{DBLP:conf/icml/DaoG24}, Gated DeltaNet~\citep{yang2025gateddeltanet}, and LSTM~\citep{hochreiter1997long} on the copy task introduced by \citet{DBLP:conf/icml/JelassiBKM24} in Sec.~\ref{sec:shifted-hidden-depdence} and state-tracking introduced by \citet{merrill-sabharwal-2023-parallelism} in Sec.~\ref{sec:state-tracking-results}.  In order to keep the number of layers at the same order of magnitude, we use two layers for the diagonal linear RNN baselines and one layer for FP-Mamba and LSTM. Finally, we discuss the required number of fixed-point iterations in the context of state-tracking and language modeling in Sec.~\ref{sec:required-number-fpiters}.

\subsection{Introducing Matrix States}\label{sec:memory-matrix-state}

Memory capacity is an important consideration in RNNs. In preliminary experiments, we notice a clear gap between the performance of a Fixed-Point RNNs and Mamba in terms of copying ability. We attribute this difference in performance to Mamba's state-expansion which endows it with matrix hidden states similar to linear attention, DeltaNet, or mLSTM ~\citep{katharopoulos2020transformers, schlag2021linear, DBLP:conf/nips/BeckPSAP0KBH24}. In simple terms, these models use an outer product of an input-dependent vector $\mathbf{b}_t \in \mathbb{R}^{d_{\text{state}}}$ (i.e. the key) and the input vector $\mathbf{x}_t \in \mathbb{R}^{d_{\text{inner}}}$ (i.e. the value) as an input to a  matrix-valued recurrence with hidden state and transition gate $\mathbf{H}_t, \boldsymbol{\lambda}_t \in \mathbb{R}^{d_{\text{state}} \times d_{\text{inner}}}$. The hidden state is then contracted with another input-dependent vector $\mathbf{c}_t \in \mathbb{R}^{d_{\text{state}}}$ (i.e. the query) to get the output $\mathbf{y}_t^\top=\mathbf{c}_t^\top \mathbf{H}_t \in \mathbb{R}^{d_{\text{inner}}}$:
\begin{equation}\label{eqn:mamba}
    \mathbf{H}_t=\boldsymbol{\lambda}_t \odot \mathbf{H}_{t-1}+\mathbf{b}_t\mathbf{x}_t^\top,
\end{equation}

This matrix-valued recurrence introduces some challenges to our fixed-point framework. Specifically, in order to mix all the channels over the entirety of the state elements, the mixer has to be a fourth-order tensor $\mathcal{Q}_t \in \mathbb{R}^{d_{\text{state}}\times d_{\text{inner}}\times d_{\text{state}}\times d_{\text{inner}}}$ in
\begin{equation} \label{eqn:matrix-valued}
    \mathbf{H}_t^\ell 
    =\boldsymbol{\lambda}_t\odot \mathbf{H}_{t-1}^\ell \mathcal{Q}_t
    \bullet \mathbf{b}_t\mathbf{x}_t^\top 
    + \left(\mathcal{I}-\mathcal{Q}_t\right)
    \bullet \mathbf{H}_{t}^{\ell-1},
\end{equation}
where $\bullet$ denotes the tensor contraction $\mathrm{einsum}(klij,ij \to kl)$ with fourth-order identity tensor $\mathcal{I}$ of the same shape as $\mathcal{Q}_t$. Certainly, computing the fixed-point introduced in Eq.~\ref{eqn:matrix-valued} is very challenging both in terms of computation and memory. As we will confirm in Sec.~\ref{sec:fp-mamba-iteration}, one solution is to pass the contracted output $y_t$ between fixed-point iterations:
\begin{equation}\label{eqn:matrix-valued-factorized}
    \mathbf{H}_t^\ell =\boldsymbol{\lambda}_t\odot \mathbf{H}_{t-1}^\ell + \mathbf{b}_t\left(\mathbf{Q}_t \mathbf{x}_t\right)^\top +\mathbf{b}_t\left(\left(\mathbf{I}-\mathbf{Q}_t\right) \mathbf{y}_{t}^{\ell-1}\right)^\top.
\end{equation}
This implicitly factorizes the tensor mixer $\mathcal{Q}_t$ into separately mixing along dimension $d_{\text{inner}}$ which is used for better expressivity, and dimension $d_{\text{state}}$ which is used for better memory capacity.

\subsection{FP-Mamba Iteration}\label{sec:fp-mamba-iteration}

Let us apply the the fixed-point RNN framework to the Mamba parametrization. We represent the hidden state as $\mathbf{H}_t^\ell$, where $t$ is the token index (i.e., indexing over the sequence dimension), and $\ell$ is the fixed-point iteration index (i.e., indexing over the depth dimension). The same notation is used for other variables to emphasize when they depend on the input and hidden state of the current iteration. We propose the following iteration to adapt Mamba with notation from App.~\ref{app:mamba-definition} to the fixed-point mechanism for matrix state RNNs in Eq.~\ref{eqn:matrix-valued-factorized}:
\begin{lightgraybox}
\vspace{-2mm}
\begin{gather} 
    \mathbf{H}_t^\ell=\boldsymbol{\lambda}_t \odot \mathbf{H}_{t-1}^\ell
    +\mathbf{\bar{b}}_t^\ell\left(\Delta_t\mathbf{Q}_t^\ell \mathbf{x}_t\right)^\top
    +\mathbf{\bar{b}}_t^\ell\left(\Delta_t\left(\mathbf{I}-\mathbf{Q}_t^\ell \right) \mathbf{y}_t^{\ell-1}\right)^\top,
    \notag \\ \label{eqn:fp-mamba-iteration}
    {\mathbf{y}_t^\ell}^ \top=({\mathbf{\bar{c}}_t^\ell})^\top  \mathbf{H}_t^\ell.
\end{gather}
\end{lightgraybox}
\vspace{\parskip}

L2-normalizing $\mathbf{\bar{b}}_t^\ell$ and $\mathbf{\bar{c}}_t^\ell$ allows to limit the Lipschitz constant according to Theorem~\ref{thm:stabilized-fixed-point-rnn}. Furthermore, we replace the normalization term $(\mathbf{1}-\boldsymbol{\lambda}_t)$ with Mamba's normalization term $\Delta_t$. Expanding $\mathbf{y}_{t}^{\ell-1}$ yields the recurrence on the matrix state 
\begin{equation}
    \mathbf{H}_t^\ell
    = \boldsymbol{\lambda}_t\odot \mathbf{H}_{t-1}^\ell
    + \mathbf{\bar{b}}_t^\ell (\Delta_t\mathbf{Q}_t^\ell \mathbf{x}_t)^\top
    + \mathbf{\bar{b}}_t^\ell (\mathbf{\bar{c}}_t^{\ell-1})^\top \mathbf{H}_t^{\ell-1} (\mathbf{I}-\mathbf{Q}_t^\ell)^\top\Delta_t,
\end{equation}
where the last term nicely illustrates the two components which mix the channels of the hidden states: the low-rank matrix $\mathbf{\bar{b}}_t^\ell(\mathbf{\bar{c}}_t^{\ell-1})^\top$ mixes over the dimension $d_{\text{state}}$, while $(\mathbf{I}-\mathbf{Q}_t^\ell)^\top$ mixes over the dimension $d_{\text{inner}}$. This factorization significantly simplifies the fourth-order tensor mixer formulation introduced in Eq.~\ref{eqn:matrix-valued}, remains expressive as discussed in App.~\ref{app:low-rank-expressiveness-proof}, and performs well in practice.

Finally, Eq.~\ref{eqn:fp-mamba-iteration} can be computed as Mamba with an adjusted input $\mathbf{\tilde{x}}_t^\ell=\mathbf{Q}_t^\ell \left( \mathbf{x}_t - \mathbf{y}_t^{\ell-1} \right) + \mathbf{y}_t^{\ell-1}$,
\begin{equation}
    \label{eqn:fp-mamba-as-mamba}
    \mathbf{H}_t^\ell
    =\boldsymbol{\lambda}_t \odot \mathbf{H}_{t-1}^\ell
    +\mathbf{\bar{b}}_t^\ell\left(\Delta_t\mathbf{\tilde{x}}_t^\ell\right)^\top.
\end{equation}

\begin{figure}[t!]
\centering
\begin{minipage}[b]{0.45\linewidth}
    \centering
    \small
    \begin{tabular}{|c|c|c|c|c|}
        \hline
        \multicolumn{4}{|c|}{Dependence on \textbf{$\mathbf{y}_{t-1}^{\ell-1}$}} & \multirow{2}{*}{\textbf{Test Accuracy}} \\ \cline{1-4}
       $\boldsymbol{\lambda}_t$ & $\mathbf{Q}_t$ & $\mathbf{b}_t$ & $\mathbf{c}_t$ & 
        \\\hline
        \xmark & \xmark & \xmark & \xmark & $0.11\quad \pm 0.00$ \\
        \cmark & \xmark & \xmark & \xmark & $0.53\quad \pm 0.02$ \\
        \xmark & \cmark & \xmark & \xmark & $0.45\quad \pm 0.05$ \\
        \cmark & \cmark & \xmark & \xmark & $0.55\quad \pm 0.05$ \\\hline
        \xmark & \xmark & \cmark & \cmark & $0.81\quad \pm 0.01$ \\
        \cmark & \xmark & \cmark & \cmark & $0.88\quad \pm 0.01$ \\
        \xmark & \cmark & \cmark & \cmark & $0.86\quad \pm 0.02$ \\
        \cmark & \cmark & \cmark & \cmark & $0.94\quad \pm 0.03$ \\\hline
    \end{tabular}
    \vskip 0.08in
    \captionof{table}
    {\small \textit{Effect of shifted hidden state dependence $\mathbf{y}_{t-1}^{\ell-1}$ on copying at $\times2$ length generalization. Each column determines which input-dependent component of the recurrence in Eq.~\ref{eqn:fp-mamba-iteration} also depends on $\mathbf{y}_{t-1}^{\ell-1}$. Performance is unlocked by including a hidden dependence for $\mathbf{b}_t$ and $\mathbf{c}_t$.}}
    \label{tab:h_t-1}
\end{minipage}
\hfill
\begin{minipage}[b]{0.45\linewidth}
    \centering
    \includegraphics[width=\linewidth, trim={7pt 0pt 2pt 0pt}, clip]{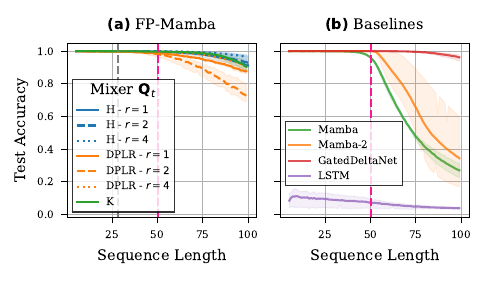}
    \captionof{figure}
    {\small \textit{Sequence length generalization on the copy task. A 1-layer FP-Mamba-H matches a 2-layer GatedDeltaNet baseline. Note that the median number of fixed-point iterations at test time $\ell^*$ (gray vertical line) is well below the longest training sequence length (pink line).
    }}\label{fig:copying}
\end{minipage}
\vspace{-0.2cm}
\end{figure}

In other words, one fixed-point step consists of a channel mixing using $\mathbf{Q}_t$, followed by a sequence mixing using Mamba. This separation of concerns allows to speed up the parallel recurrence in time using the Mamba implementation. To find a fixed-point, the two phases are repeated until $\frac{\|\mathbf{y}^{\ell} - \mathbf{y}^{\ell-1}\|_\infty}{\|\mathbf{y}^{\ell}\|_\infty} < 0.1$ is satisfied. After these $\ell^*$ iterations, required for the model to converge to a fixed-point, $\mathbf{H}_t^*$ and $\mathbf{y}_t^*$ present the hidden state and output of the dense matrix-valued RNN $F_{\boldsymbol{\theta}}$. Similar to Mamba, we apply a gated linear unit $\mathbf{g}_t\in \mathbb{R}^{d_{\text{inner}}}$ to the output, which we observe to provide a slight improvement in performance when present within the fixed-point loop: $\mathbf{\tilde{y}}_t^{\ell}=\mathbf{g}_t \odot \mathbf{y}_t^{\ell-1}$.

\subsection{Shifted Hidden State Dependence $\mathbf{y}_{t-1}^{\ell-1}$}\label{sec:shifted-hidden-depdence}

In preliminary experiments, we observe that even the Fixed-Point RNN with input-dependent parameters and matrix state akin to Mamba-1 is outperformed by Mamba-2 or DeltaNet \citep{DBLP:conf/icml/DaoG24, DBLP:journals/corr/abs-2406-06484} on a copy task. Inspired by the short convolution in Mamba, we investigate the effect of augmenting the input-dependence of parameters $\boldsymbol{\lambda}_t^\ell$, $\mathbf{b}_t^\ell$, $\mathbf{c}_t^\ell$, and $\mathbf{Q}_t^\ell$ at iteration $\ell$ with a shifted hidden state dependence. In practice, this means that these are linear functions of $\mathbf{x}_t$ as well as the shifted previous iterate in depth $\mathbf{y}_{t-1}^{\ell-1}$.
We refer the reader to App.~\ref{app:fixed-point-mamba-parametrization} for the exact formulation of the dependency.

In Tab.~\ref{tab:h_t-1}, we ablate the hidden state dependence for various combinations of $\boldsymbol{\lambda}_t$, $\mathbf{b}_t$, $\mathbf{c}_t$, and a Householder $\mathbf{Q}_t$. Observe that the dependence of $\mathbf{b}_t$ and $\mathbf{c}_t$ is crucial to enable the model to copy. In App.~\ref{app:hidden-dependence-in-theory}, we discuss why this dependence of $\mathbf{b}_t$ and $\mathbf{c}_t$ could be important for copying. If additionally $\boldsymbol{\lambda}_t$ and $\mathbf{Q}_t$ depend on $\mathbf{y}_{t-1}^{\ell-1}$, the copy task is essentially solvable at $\times 2$ length generalization. We therefore adopt the hidden state dependence for all components in FP-Mamba.

In Fig.~\ref{fig:copying}, we evaluate length generalization on the copying task. While the best-performing baseline Gated DeltaNet is specifically designed for associative recall tasks~\citep{yang2025gateddeltanet}, both Mamba 1 and 2 struggle with $\times2$ generalization. FP-Mamba closes this gap and proves the effectiveness of our proposed modifications for better memory. We would like to highlight that the number of fixed-point iterations $\ell^*$ (gray vertical line) in FP-Mamba is well below the maximum sequence length. 

\begin{wrapfigure}[16]{R}{0.5\linewidth}
    \vspace{-1.25cm}
    \centering
    \centerline{\includegraphics[width=1.0\linewidth]{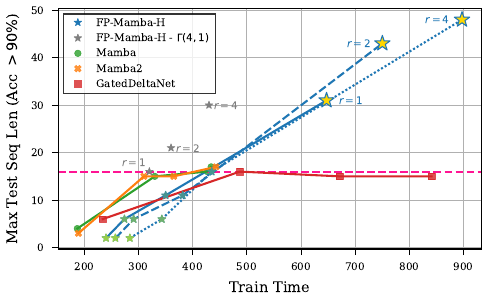}}
    \vspace{-1mm}
    \caption{\small \textit{Length generalization as a function of training time on $A_5$. Wall clock time is plotted against the longest test sequence length with $>90\%$ accuracy for every model. While baselines of increasing depth cannot generalize beyond the training sequence length $16$ (horizontal pink line), our proposed framework allows to achieve much higher generalization by scaling training time through the number of fixed-point iterations $\ell$.
    }} \label{fig:generalization-vs-train-time}
\end{wrapfigure}

\subsection{State-Tracking}\label{sec:state-tracking-results}

In Fig.~\ref{fig:fp-mamba-a5-s5}, we evaluate the state-tracking capabilities of FP-Mamba with Kronecker, Householder, and DPLR channel mixers of $r\in\{1, 2, 4\}$ reflections or ranks, respectively. In particular, we compare our FP-Mamba to the baselines with regards to their length generalization beyond the training sequence length $16$. As expected, LSTM solves $A_5$ and $S_5$, while Mamba and Mamba-2 are not able to learn it even at the training sequence length. Similar to Fig.~\ref{fig:fp-rnn-mixer-variants}, the Kronecker structure seems to be the most suitable for the task. But FP-Mamba based on Householders also improves in terms of sequence length generalization presumably due to its improved memory. A comparison to the recent DeltaProduct \citep{siems2025deltaproduct} on training sequence length 128 is available in App.~\ref{app:long-range-state-tracking}.

\begin{wrapfigure}[18]{R}{0.5\linewidth}
    \vspace{-13mm}
    \centering
    \includegraphics[width=1.0\linewidth]{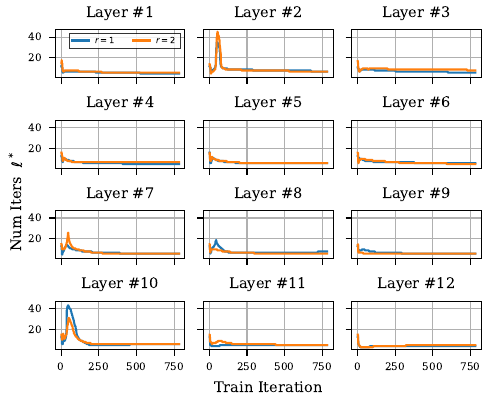}
    \vspace{-6mm}
    \caption{\small\textit{
    The effective number of fixed-point iterations for each layer of a FP-Mamba-H throughout language pretraining on FineWeb~\citep{penedo2024finewebdatasetsdecantingweb} at context length 2048.    The corresponding validation perplexities are available in App.~\ref{app:experiment-language-modeling}.
    }} \label{fig:fp-iters-lm.pdf}
\end{wrapfigure}

\subsection{Required Number of Iterations $\ell^*$} \label{sec:required-number-fpiters}

A fixed-point iteration in the forward pass inevitably introduces sequential overhead to the computation of a model. While this might be acceptable for sequential generation at test time, reduced parallelism can be inhibiting at training time. In Fig.~\ref{interpolation.pdf}, we therefore evaluate FP-Mamba-H on $A_5$ with limited number of fixed-point iterations at training time $\ell_{\text{max}} \in \{2, 4, 8, 16\}$. We observe that the performance decreases once $\ell_{\text{max}}$ is lower than the training sequence length of $16$.  In Fig.~\ref{fig:generalization-vs-train-time}, we confirm that the resulting longer training times are indeed required for good length generalization. However, as opposed to baselines of increasing depth $\in \{1,2,4,6,8\}$, fixed-point iterations gain from the additional training time. Furthermore, there is room to improve efficiency, as suggested by a simple randomization scheme (gray stars) where $\ell_{\text{max}} \sim\Gamma(4,1)$ is sampled from a Gamma distribution with mean $4$ for every batch. But most importantly, the effective number of fixed-point iterations depends on the difficulty of the task. Indeed, Fig.~\ref{fig:fp-iters-lm.pdf} shows that the model automatically adapts to using less fixed-point iterations on language pretraining at context length 2048. Similarly, on copying (Fig.~\ref{fig:copying}) and and modular arithmetic (Fig.~\ref{fig:mod-arith-max-iters}), we observe that the required number of fixed-point iterations $\ell^*$ is well below the sequence length $T$. This suggests that the model adapts to $O(T)$ complexity on simpler tasks when the full state-tracking expressivity is not required.

\begin{wraptable}[3]{R}{0.6\linewidth}
    \vspace{-14.5mm}
    \small
    \begin{tabular}{l|c|c} 
         &Forward &Backward         \\ \hline
         Mamba & $O(T)$ & $O(T)$    \\
         FP-Mamba 
            & $O\left((T + C_{\mathbf{Q}_t}) \cdot\min(\ell^*, \ell_{\text{max}}) \right)$ 
            & $O(T + C_{\mathbf{Q}_t})$
    \end{tabular}
    \vspace{-1mm}
    \caption{\small\textit{
    Complexity of FP-Mamba in comparison to Mamba. 
    The cost of channel mixing with structure $\mathbf{Q}_t$ is denoted by $C_{\mathbf{Q}_t}$.  
    }} \label{tab:complexity}
\end{wraptable}

\section{Discussion}\label{sec:discussion}

A fixed-point mechanism, such as the one introduced in this paper, endows a parallelizable, diagonal linear RNN with the ability to dynamically increase the sequential computation and describe a dense linear RNN in the limit. Our results show that such a paradigm can enable both strong state-tracking and memory capabilities with a constant number of parameters in a combined sequence and channel mixing layer (Fig.~\ref{fig:params}). In fact, the fixed-point iteration gradually transforms a diagonal (i.e., channel-wise) RNN into a dense (i.e., channel-mixing) RNN, thereby allowing to trade parallel computation for expressivity (Fig.~\ref{interpolation.pdf}) without incurring additional cost during backpropagation (cf.\ Tab.~\ref{tab:complexity}).

For Fixed-Point RNNs to become competitive in practice, it is important to further understand the trade-offs between parallel and sequential computation. In the worst case, as shown in Tab.~\ref{tab:complexity}, FP-RNNs could behave like traditional, non-linear RNNs with quadratic runtime $O(T^2)$ if the sequential overhead $\ell^*$ is linear in the sequence length $T$. This, however, is not necessarily a disadvantage since FP-RNNs adapt $\ell^*$ to the difficulty of the task. In this paper, we focus on introducing the framework for FP-RNNs and leave the improvement of fixed-point convergence rates to future work.

Fixed-Point RNNs present an interesting opportunity to be fused into a single GPU kernel with reduced memory I/O. This is an inherent advantage from performing repeated computation on the same operands. Several open problems need to be solved to achieve that: 
(1) different implementations such as sequential, parallel, or chunk-wise should converge to the same fixed-points, 
(2) the memory footprint of the fixed-point iteration should satisfy current hardware limitations, and 
(3) alternative sequence or channel mixer structures could unlock higher efficiency. 
Future progress on these problems could enable significant speed-ups in practical implementations of Fixed-Point RNNs.

\paragraph{Conclusion} In this paper, we presented a framework to cast a general class of dense linear RNNs as fixed-points of corresponding diagonal linear RNNs. Fixed-Point RNNs provide a mechanism to trade computation complexity for expressivity while uniting the expressivity of recurrent models with the improved memory of linear attention models. Following encouraging results on toy tasks specifically designed to assess these capabilities, we hope this paper enables more expressive sequence mixers.

%%%%%%%%%%%%%%%%%%%%%%%%%%%%%%%%%%%%%%%%%%%%%%%%%%%%%%%%%%%%
%%% Acknowledgments and References
%%%%%%%%%%%%%%%%%%%%%%%%%%%%%%%%%%%%%%%%%%%%%%%%%%%%%%%%%%%%

\begin{ack}
    We would like to thank Riccardo Grazzi and Julien Siems for the helpful discussions and comments. Antonio Orvieto, Felix Sarnthein and Sajad Movahedi acknowledge the financial support of the Hector Foundation. Felix Sarnthein would also like to acknowledge the financial support from the Max Planck ETH Center for Learning Systems (CLS).
\end{ack}

\bibliographystyle{icml2025}
\bibliography{main}
\newpage

%%%%%%%%%%%%%%%%%%%%%%%%%%%%%%%%%%%%%%%%%%%%%%%%%%%%%%%%%%%%
%%% APPENDIX
%%%%%%%%%%%%%%%%%%%%%%%%%%%%%%%%%%%%%%%%%%%%%%%%%%%%%%%%%%%%

% Appendix Table of Contents
\appendix
%\section*{Appendix} 
\part{Appendices}%
\etocsettocstyle{}{}% remove title and header of ToC
\localtableofcontents% 
\newpage

\section{Background and Literature Review (Sec.~\ref{sec:background})}\label{app:literature-review}

Since their introduction~\citep{rumelhart1986sequential, elman1990finding}, RNNs have significantly contributed to the evolution of machine learning methods for sequential data, marked by key innovations such as the LSTM~\citep{hochreiter1997long} and Echo-State Networks~\citep{jaeger2001echo}. However, two significant challenges lead to the widespread adoption of the Transformer architecture~\citep{vaswani2017attention}: first, GPU hardware is optimized for large-scale matrix multiplications. Second, recurrent models are notoriously difficult to train due to vanishing and exploding gradients~\citep{hochreiter2001gradient, pascanu2013difficulty}.

\vspace{-3mm}
\paragraph{Beyond softmax attention.} The quadratic runtime complexity of Transformers motivated research on the linearization of its attention mechanism~\citep{wang2020linformer, chen2021skyformer, choromanski2020rethinking} -- a technique that inevitably brings the sequence mixing mechanism closer to RNN-like processing~\citep{katharopoulos2020transformers,schlag2021linear}. Recently, improvements on the long-range-arena benchmark~\citep{tay2020long} with state-space models~\citep{DBLP:conf/iclr/GuGR22, smith2023simplified} sparked a renewed interest in recurrent models \citep{DBLP:journals/corr/abs-2312-00752, sun2023retentive, DBLP:journals/corr/abs-2402-19427, qin2024hgrn2, peng2024eagle, DBLP:conf/icml/YangWSPK24}. New efficient token mixing strategies such as Mamba~\citep{DBLP:journals/corr/abs-2312-00752} showcase impressive results in language modeling~\citep{waleffe2024empirical} while offering linear runtime complexity. These models are fundamentally diagonal linear RNNs, which enables parallel algorithms such as  parallel scans~\citep{DBLP:conf/iclr/MartinC18} and fast linear attention based implementations \citep{DBLP:journals/corr/abs-2406-06484, DBLP:conf/icml/DaoG24}. 

\vspace{-3mm}
\paragraph{Expressivity of Diagonal vs. Dense RNNs.} It was recently pointed out by~\citet{DBLP:journals/corr/abs-2402-19047} that the diagonality in the hidden-to-hidden state transition inevitably causes expressivity issues, showcasing a stark distinction with classic dense nonlinear RNNs, known to be Turing-complete~\citep{siegelmann1992computational, korsky2019computational} and fully expressive in a dynamical systems sense~\citep{hanson2020universal}. ~\citet{DBLP:conf/icml/MerrillPS24} pointed at a similar issue with diagonality using tools from circuit complexity: in contrast to e.g.~LSTMs, diagonal linear RNNs can not express state-tracking algorithms. This issue sparked interest in designing fast non-diagonal recurrent mechanisms and, more generally, in providing architectures capable of solving state-tracking problems. The first example of such an architecture is DeltaNet~\citep{DBLP:journals/corr/abs-2406-06484} employing a parallelizable Housholder reflection as a state transition matrix. Endowing this matrix with negative eigenvalues improves tracking in SSMs \citep{DBLP:journals/corr/abs-2411-12537}. In concurrent work, \citet{siems2025deltaproduct} show that adding more reflections improves state-tracking.

\vspace{-3mm}
\paragraph{Toy tasks.} Several works propose toy tasks to identify specific shortcomings of modern architectures. Specifically, \citet{DBLP:conf/nips/BeckPSAP0KBH24} use the Chomsky hierarchy to organize formal language tasks, of which a modular arithmetic task remains unsolved. With similar motivations, \citet{merrill-sabharwal-2023-parallelism} introduce a set of word-problems for assessing state-tracking capabilities, among which the $A_5$ and $S_5$ tasks remain unsolved by Transformers and SSMs. Motivated by Transformers outperforming RNNs in memory capabilities, \citet{DBLP:conf/icml/JelassiBKM24} introduce a copying task as a fundamental benchmark for memory. We focus on these tasks to evaluate our Fixed-Point RNN framework.

\vspace{-3mm}
\paragraph{Recurrence in Depth.} Machine learning models that reduce an intrinsic energy through iterations have been an object of interest for decades \citep{hopfield1982networks, miyato2025kuramoto}. For example, recurrence in depth can increase the expressivity of Transformers \citep{DBLP:conf/iclr/DehghaniGVUK19, DBLP:conf/nips/SchwarzschildBG21, DBLP:conf/icml/GiannouRS0LP23} and is sometimes also understood as adaptive compute time \citep{DBLP:journals/corr/Graves16}. Under certain assumptions, iterated blocks can converge to an equilibrium point where they implicitly describe an expressive function~\citep{DBLP:conf/nips/BaiKK19, DBLP:journals/simods/GhaouiGTAT21}. Recently, this technique has been used to approximate non-linear RNNs with a fixed-point iteration of parallelizable linear RNNs \citep{DBLP:conf/iclr/LimZSK24, DBLP:conf/nips/GonzalezWSL24}. In concurrent work to ours, \citet{schöne2025implicitlanguagemodelsrnns} apply an iteration in depth to Mamba-2 and Llama blocks to increase expressivity and show promising results of their \emph{implicit language models}. In contrast, we derive an explicit fixed-point iteration towards a dense linear RNN with a theoretically motivated parameterization, and focus on theoretical toy tasks.

\newpage
\section{Fixed-Points as an RNN Layer (Sec.~\ref{sec:fixed-point-rnn})}

\subsection{Proof for Theorem~\ref{thm:stabilized-fixed-point-rnn} (Lipschitz constant of $f_{\theta}(\mathbf{x}, \mathbf{h})$ is $<1$)}\label{app:proof-fixed-point}

\begin{lightgraybox}
    \thmstabilizedfixedpointrnn*
\end{lightgraybox}
We start the proof with the unrolled form of the linear RNN
\begin{equation*}
    f_{\boldsymbol{\theta}}(\mathbf{x}, \mathbf{h})_t=\sum_{\tau=0}^t 
    \mathbf{\Lambda}^{t-\tau} \left(\mathbf{I}-\mathbf{\Lambda}\right) \left(
    \mathbf{Q}\mathbf{B}_\tau \mathbf{x}_\tau + \left(\mathbf{I} - \mathbf{Q} \right)\mathbf{h}_\tau
    \right).
\end{equation*}
Note that in order to prove the theorem, we need to show that
\begin{equation*}
    \left\Vert f_{\boldsymbol{\theta}}(\mathbf{x}, \mathbf{h})_t-f_{{\boldsymbol{\theta}}}(\mathbf{x}, \mathbf{h}')_t\right\Vert_2 < \left\Vert \mathbf{h}-\mathbf{h}'\right\Vert_2,
\end{equation*}
where $\mathbf{h}$ and $\mathbf{h}'$ are two arbitrary hidden states. From the unrolled form, this is equivalent to
\begin{equation}
    \label{eqn:thm-cauch-schw}
    \left\Vert 
    \sum_{\tau=0}^t 
    \mathbf{\Lambda}^{t-\tau} 
    \left(\mathbf{I}-\mathbf{\Lambda}\right)
    \left(\mathbf{I}-\mathbf{Q}\right) 
    (\mathbf{h}_\tau-\mathbf{h}_\tau')
    \right\Vert_2 
    < \left\Vert \mathbf{h}-\mathbf{h}'\right\Vert_2.
\end{equation}

From the Cauchy-Schwarz inequality, we can upper-bound the LHS of Eq.~\ref{eqn:thm-cauch-schw} as
\begin{equation*}
    \left\Vert \sum_{\tau=0}^t\mathbf{\Lambda}^{t-\tau}\left(\mathbf{I}-\mathbf{\Lambda}\right)\left(\mathbf{I}-\mathbf{Q}\right) (\mathbf{h}_\tau-\mathbf{h}_\tau')\right\Vert_2\leq \left\Vert \sum_{\tau=0}^t \mathbf{\Lambda}^{t-\tau}\right\Vert_2\cdot \left\Vert \mathbf{I}-\mathbf{\Lambda} \right\Vert_2\cdot \left\Vert \mathbf{I}-\mathbf{Q} \right\Vert_2 \cdot \left\Vert \mathbf{h}_{\leq t} - \mathbf{h}_{\leq t}' \right\Vert_2,
\end{equation*}
where $\mathbf{h}_{\leq t}$ corresponds to the concatenation of the hidden states $\mathbf{h}_{\tau}$ for $\tau\leq t$. Now to prove this product is $< \left\Vert \mathbf{h}-\mathbf{h}'\right\Vert_2$, consider the terms individually. Since $\left\Vert \mathbf{h}_{\leq t}-\mathbf{h}_{\leq t}'\right\Vert_2\leq \left\Vert \mathbf{h}-\mathbf{h}'\right\Vert_2$, the remaining terms need to be $<1$. Assuming $\mathbf{\Lambda}$ is contractive, we use the Neumann series $\sum_{\tau=0}^t\mathbf{\Lambda}^{t-\tau}\leq (\mathbf{I}-\mathbf{\Lambda})^{-1}$ and get
\begin{equation*}
    \left\Vert \sum_{\tau=0}^t \mathbf{\Lambda}^{t-\tau}\right\Vert_2\cdot \left\Vert \mathbf{I}-\mathbf{\Lambda} \right\Vert_2\leq 1.
\end{equation*}
Finally, it remains to show that
\begin{equation*}
    \left\Vert\mathbf{I}-\mathbf{Q}\right\Vert_2 < 1.
\end{equation*}
This condition can be satisfied if $\mathbf{I}-\mathbf{Q}$ is contractive. This completes our proof.\qed

\subsection{Effect of normalization factor $(\mathbf{I}-\mathbf{\Lambda}_t)$ on class of matrices $\mathbf{A}_t$} \label{app:effect-of-normalization-on-matrices-A}

For the sake of exposition, the introduction of the implicit parametrization in Sec.~\ref{sec:explicit-to-implicit} did not consider input normalization $(\mathbf{I}-\mathbf{\Lambda}_t)$. However, as discussed in Sec.~\ref{sec:fp-iteration} this is a crucial component to stabilize the recurrence in time. To derive the representable dense matrices $\mathbf{A}_t$ in the presence of the normalization factor $(\mathbf{I}-\mathbf{\Lambda}_t)$, let us start by assuming a fixed-point was found according to Thm.~\ref{thm:descent-dir}:
\begin{align*}
    \mathbf{h}^*_t 
    &= \mathbf{\Lambda}_t \mathbf{h}^*_{t-1} + (\mathbf{I} - \mathbf{\Lambda}_t)(\mathbf{Q}_t \mathbf{B}_t \mathbf{x}_t + (\mathbf{I} - \mathbf{Q}_t)\mathbf{h}_t^* 
    \\
    &= \mathbf{\Lambda}_t \mathbf{h}^*_{t-1} + (\mathbf{I} - \mathbf{\Lambda}_t)\mathbf{Q}_t \mathbf{B}_t \mathbf{x}_t +  (\mathbf{I} - \mathbf{\Lambda}_t)(\mathbf{I} - \mathbf{Q}_t)\mathbf{h}_t^*.
\end{align*}

Rearranging the terms allows to move $\mathbf{h}^*_t$ to the other side
\begin{equation*}
    (\mathbf{I} - (\mathbf{I} - \mathbf{\Lambda}_t)(\mathbf{I} - \mathbf{Q}_t)) \mathbf{h}^*_t = \mathbf{\Lambda}_t \mathbf{h}^*_{t-1} + (\mathbf{I} - \mathbf{\Lambda}_t)\mathbf{Q}_t \mathbf{B}_t \mathbf{x}_t.
\end{equation*}

Moving $(\mathbf{I} - (\mathbf{I} - \mathbf{\Lambda}_t)(\mathbf{I} - \mathbf{Q}_t))$ back to  other side yields
\begin{align*}
    \mathbf{A}_t 
    &= (\mathbf{I} - (\mathbf{I} - \mathbf{\Lambda}_t)(\mathbf{I} - \mathbf{Q}_t))^{-1} \mathbf{\Lambda}_t \\
    &= \left(\mathbf{\Lambda}_t^{-1} (\mathbf{I} - (\mathbf{I} - \mathbf{\Lambda}_t)(\mathbf{I} - \mathbf{Q}_t))\right)^{-1}  \\
    &= \left(\mathbf{\Lambda}_t^{-1} - (\mathbf{\Lambda}_t^{-1} - \mathbf{I})(\mathbf{I} - \mathbf{Q}_t)\right)^{-1}  \\
    &= \left(\mathbf{I} +  (\mathbf{\Lambda}_t^{-1} - \mathbf{I})\mathbf{Q}_t\right)^{-1}
\end{align*}
Following the standard assumptions that $\mathbf{0} \preceq \mathbf{\Lambda}_t, \mathbf{Q}_t \preceq \mathbf{I}$, the matrix $(\mathbf{I} + (\mathbf{\Lambda}_t^{-1} - \mathbf{I}) \mathbf{Q}_t)$ is full rank and $\succeq \mathbf{I}$. Therefore its inverse $\mathbf{A}_t$ exists and is contractive. The expressivity of $\mathbf{A}_t$ is only limited if $\mathbf{\Lambda}_t\approx\mathbf{I}$. This however would also be problematic for diagonal SSM and therefore the Mamba initialization is bias towards $\mathbf{\Lambda}_t \prec \mathbf{I}$. Thus, the normalization does not pose a significant problem for the expressivity of $\mathbf{A}_t$ in practice.

\subsection{Implicit Differentiation for Optimizing Fixed-Point RNNs} \label{app:optimizing-fixed-point-rnns}

One advantage of converging to a fixed-point over general layer looping lies in model training. Since the gradient with respect to $\mathbf{h}^{0}$ is not needed, implicit differentiation can be used to avoid storing and backpropagating through the computational graph of the fixed-point iteration, as discussed by \citet{pmlr-v80-liao18c}, \citet{DBLP:conf/nips/BaiKK19}. To see this, consider the Jacobian across $\ell$ iterations $\mathbf{J}_{\mathbf{x}}^\ell = \frac{\partial f_{\boldsymbol{\theta}}}{\partial \mathbf{x}}(\mathbf{x}, \mathbf{h}^{\ell-1})$. Since $\mathbf{h}^{\ell-1}$ depends on $\mathbf{x}$ as well, we can recursively express $\mathbf{J}_{\mathbf{x}}^\ell$ in terms of $\mathbf{J}_{\mathbf{x}}^{\ell-1}$ and the Jacobians of a single iteration
$\mathbf{J}_{\mathbf{x}}(\mathbf{h})=\frac{\partial f_{\boldsymbol{\theta}}}{\partial \mathbf{x}}(\mathbf{x}, \mathbf{h})$ and $\mathbf{J}_{\mathbf{h}}=\frac{\partial f_{\boldsymbol{\theta}}}{\partial \mathbf{h}}(\mathbf{x}, \mathbf{h})$ by applying the chain rule
\begin{align}
    \mathbf{J}_{\mathbf{x}}^\ell = \mathbf{J}_{\mathbf{x}}(\mathbf{h}^{\ell-1}) + \mathbf{J}_{\mathbf{h}^{\ell-1}} \cdot \mathbf{J}_{\mathbf{x}}^{\ell-1}.
\end{align}

Instead of unrolling, we can implicitly differentiate $\mathbf{h}^* = f_{\boldsymbol{\theta}}(\mathbf{x}, \mathbf{h}^*)$ w.r.t.\ $\mathbf{x}$, which yields
$\mathbf{J}_{\mathbf{x}}^* = \mathbf{J}_{\mathbf{x}}(\mathbf{h}^*) + \mathbf{J}_{\mathbf{h}^*} \cdot \mathbf{J}_{\mathbf{x}}^*$. Given the conditions on the Lipschitz constant of $f_{\boldsymbol{\theta}}(\mathbf{x}, \mathbf{h})$ in $\mathbf{h}$, we can assume $\mathbf{J}_{\mathbf{h}^\ell}$ to be contractive and therefore $\left(\mathbf{I}-\mathbf{J}_{\mathbf{h}^\ell}\right)$ to be positive definite and invertible. This allows to reformulate as
\begin{equation}
    \label{eqn:hyper-grad}
    \mathbf{J}_{\mathbf{x}}^*  = (\mathbf{I} - \mathbf{J}_{\mathbf{h}^*})^{-1} \cdot  \mathbf{J}_{\mathbf{x}}(\mathbf{h}^*).
\end{equation}

The case for $\mathbf{J}_{\boldsymbol{\theta}}^*$ works analogously. This means that the gradient w.r.t.\ the input $x$ and parameters ${\boldsymbol{\theta}}$ can be computed at  the fixed-point with the cost of solving $(\mathbf{I} - \mathbf{J}_{\mathbf{h}^*})^{-1}$. \citet{DBLP:conf/icml/BaiKK21} and \citet{schöne2025implicitlanguagemodelsrnns} approximate this inverse using the first terms of the Neumann series, which leads to a truncated backpropagation formulation or \textit{phantom gradients}, incurring sequential overhead. For iteration with hidden state dependence, we can avoid this inversion altogether with Thm.~\ref{thm:descent-dir}:
\begin{lightgraybox}
\thmdescentdir*
\end{lightgraybox}
In simple terms, Thm.~\ref{thm:descent-dir} shows that parameterizing $f_{\boldsymbol{\theta}}(\mathbf{x}, \mathbf{h})$ such that $\mathbf{J}_{\mathbf{x}}(\mathbf{h})=\mathbf{J}_{\mathbf{h}}$ guarantees optimization progress even if the gradient is computed only at the fixed-point. In practice, we observe that adhering to this condition in the form of hidden state dependence speeds-up the convergence of the model during training.

\subsection{Proof for Theorem~\ref{thm:descent-dir} (Gradient of $f_{\theta}(\mathbf{x}, \mathbf{h})$ is a descent direction of $F_{\theta}(\mathbf{x})$)} \label{app:proof-descent-dir}

We start the proof by setting $\boldsymbol{\delta}:=\frac{\partial \mathcal{L}}{\partial f}$ and $\mathbf{J}_{\mathbf{x}} := \mathbf{J}_{\mathbf{x}}(\mathbf{h}^*)$. Then, we can write the backward propagation as $\frac{\partial \mathcal{L}}{\partial \mathbf{x}}=\left(\mathbf{J}_{\mathbf{x}}^*\right)^\top \boldsymbol{\delta}$. In order to prove that the gradient computed at the fixed-point is a descent direction, we need to show that $\mathbf{J}_{\mathbf{x}}^\top \boldsymbol{\delta}$ is in the direction of $\left(\mathbf{J}_{\mathbf{x}}^*\right)^\top \boldsymbol{\delta}$, or in other words, we have $\boldsymbol{\delta}^\top \mathbf{J}_{\mathbf{x}}^* \mathbf{J}_{\mathbf{x}}^\top \boldsymbol{\delta}\geq 0$. This is equivalent to showing that the symmetric part of the matrix $\mathbf{J}_{\mathbf{x}}^* \mathbf{J}_{\mathbf{x}}^\top$ is positive semi-definite.

Now note that from Eq.~\ref{eqn:hyper-grad} we have: $\mathbf{J}_{\mathbf{x}}^*\mathbf{J}_{\mathbf{x}}^\top=\left(\mathbf{I}-\mathbf{J}_{\mathbf{h}}\right)^{-1}\mathbf{J}_{\mathbf{x}}\mathbf{J}_{\mathbf{x}}^\top$. From our assumption $\mathbf{J}_{\mathbf{x}}=\mathbf{J}_{\mathbf{h}}:=\mathbf{J}$, we need to show that the symmetric part of the matrix $\left(\mathbf{I}-\mathbf{J}\right)^{-1}\mathbf{J}\mathbf{J}^\top$ is positive semi-definite. Note that $(\mathbf{I}-\mathbf{J})^{-1}$ and $\mathbf{J}$ commute by application of the Neumann series
\begin{equation*}
    \left(\mathbf{I}-\mathbf{J}\right)^{-1}\mathbf{J}=\sum_{i=1}^\infty\mathbf{J}^i=\mathbf{J}\sum_{i=0}^\infty \mathbf{J}^i=\mathbf{J}\left(\mathbf{I}-\mathbf{J}\right)^{-1},
\end{equation*}
which yields $\left(\mathbf{I}-\mathbf{J}\right)^{-1}\mathbf{J}\mathbf{J}^\top=\mathbf{J}\left(\mathbf{I}-\mathbf{J}\right)^{-1}\mathbf{J}^\top$. Going back to the definition of positive semi-definiteness, we need to show that $\boldsymbol{\delta}^\top \mathbf{J}\left(\mathbf{I}-\mathbf{J}\right)^{-1}\mathbf{J}^\top\boldsymbol{\delta}>0$ for all $\boldsymbol{\delta}$. Setting $\boldsymbol{\omega}=\mathbf{J}^\top \boldsymbol{\delta}$, this is equivalent to having $\boldsymbol{\omega}^\top \left(\mathbf{I}-\mathbf{J}\right)^{-1}\boldsymbol{\omega}$. Note that from our assumption for the Lipschitz constant of the function, we have $\left\Vert \mathbf{J} \right\Vert_2<1$, which means $\left(\mathbf{I}-\mathbf{J}\right)$ and $\left(\mathbf{I}-\mathbf{J}\right)^{-1}$  have strictly positive eigenvalues. This completes our proof.\qed

\newpage
\section{Fixed-Point Mamba (Sec.~\ref{sec:fixed-point-mamba})} \label{app:fixed-point-mamba}

\subsection{Mamba: Selective SSMs}\label{app:mamba-definition}

Mamba is a multi-layer network, with an embedding size of $d_{\text{model}}$. A Mamba block is a matrix state diagonal linear RNN which first expands a sequence of embeddings by a factor of $e$ to size $d_{\text{inner}}=e\times d_{\text{model}}$, and then computes an element-wise recurrence on the matrix hidden states $\mathbf{H}_t\in\mathbb{R}^{d_{\text{state}} \times d_{\text{inner}}}$ as
\begin{equation}
    \mathbf{H}_t=\boldsymbol{\lambda}_t \odot \mathbf{H}_{t-1}+\mathbf{b}_t\left(\Delta_t\mathbf{x}_t\right)^\top,
\end{equation}
where $\boldsymbol{\lambda}_t\in \mathbb{R}^{d_{\text{state}} \times d_{\text{inner}}}$ is an input-dependent state transition vector, $\mathbf{b}_t \in \mathbb{R}^{d_{\text{state}}}$ an input transition vector, $\mathbf{x}_t\in \mathbb{R}^{d_{\text{inner}}}$ the input, and
$\Delta_t\in\mathbb{R}^{d_{\text{inner}}\times d_{\text{inner}}}$ a diagonal matrix which acts an input normalization term. The matrices are parameterized as:
\begin{align*}
    \boldsymbol{\lambda}_t&=\exp\left(-\boldsymbol{\lambda}_{\log}\Delta_t\right),& \boldsymbol{\lambda}_{\log}&=\exp\left(\boldsymbol{\omega}\right),
    \\ \Delta_t&=\text{diag}\left(\text{softplus}\left(\mathbf{W}_\Delta \mathbf{x}_t+b_\Delta\right)\right),& \mathbf{b}_t&=\mathbf{W}_{\mathbf{b}}\mathbf{x}_t,
\end{align*}
with $\boldsymbol{\omega}\in \mathbb{R}^{d_{\text{state}}\times d_{\text{inner}}}$, $\mathbf{W}_{\Delta}\in \mathbb{R}^{d_{\text{inner}}\times d_{\text{inner}}}$, $\mathbf{W}_{\mathbf{b}}\in \mathbb{R}^{d_{\text{state}}\times d_{\text{inner}}}$, and $b_\Delta\in \mathbb{R}^{d_{\text{inner}}}$. The output of a Mamba block $\mathbf{y}_t\in \mathbb{R}^{d_{\text{inner}}}$ is a contraction of the matrix hidden state with $\mathbf{c}_t\in\mathbb{R}^{d_{\text{state}}}$
\begin{equation*}
    \mathbf{y}_t^\top=\mathbf{c}_t^\top \mathbf{H}_t, \quad \mathbf{c}_t=\mathbf{W}_{\mathbf{c}}\mathbf{x}_t,
\end{equation*}
for $\mathbf{W}_{\mathbf{c}}\in \mathbb{R}^{d_{\text{state}}\times d_{\text{inner}}}$. Note that Mamba proposes a skip connection of $\mathbf{y}_t+\mathbf{D} \odot \mathbf{x}_t$, where $\mathbf{D}\in \mathbb{R}^{d_{\text{inner}}}$ is an input-independent vector. Finally, the model output is usually scaled by a gated linear unit (GLU) as $\mathbf{\tilde{y}}_t=\mathbf{g}_t \odot \mathbf{y}_t$, where $\mathbf{g}_t=\text{SiLU}\left(\mathbf{W}_{\mathbf{g}} \mathbf{x}_t\right)$ is a non-linear function of the input.

\subsection{FP-Mamba Parametrization} \label{app:fixed-point-mamba-parametrization}

In our design of FP-Mamba, we aim to minimize our interventions in the underlying architecture in order to showcase the adaptability of our proposed framework. Consequently, we do not modify the careful parameterization of $\boldsymbol{\lambda}$ and the weight-tied normalization factor $\Delta_t$ proposed in the original Mamba formulation, and instead rely on layer normalization to limit the Lipschitz constant of the Mamba function. Specifically, in the FP-Mamba model we redefine $\mathbf{b}_t$ and $\mathbf{c}_t$ as $\mathbf{b}_t^\ell=\mathbf{W}_{\mathbf{b}}^{\mathbf{y}}\mathbf{y}_{t-1}^{\ell-1}+\mathbf{W}_{\mathbf{b}}^{\mathbf{x}} \mathbf{x}_t$ and $\mathbf{c}_t=\mathbf{W}_{\mathbf{c}}^{\mathbf{y}} \mathbf{y}_{t-1}^{\ell-1} + \mathbf{W}_{\mathbf{c}}^{\mathbf{x}} \mathbf{x}_t$. The remaining components, namely the state transition matrix $\boldsymbol{\lambda}_t$ and the GLU component are parameterized identically to Mamba.

The normalization is applied to the output of the model $\mathbf{y}_t$ after each iteration. While in theory projecting the output onto the unit sphere does not guarantee a Lipschitz constant $<1$ , we observe that in practice, this helps with stabilizing the forward and backward pass of the fixed-point RNN framework. We attribute this observation to the fact that achieving a $>1$ Lipschitz constant requires the output of the RNN to become its additive inverse after an iteration, which rarely happens in practice.

\subsection{Parameterizing the mixers}

We parameterize the channel mixer variants as follows:
\begin{itemize}
    \item \textbf{Diagonal Plus Low Rank:} we define $\mathbf{u}_{it}^\ell=\text{SiLU}\left(\mathbf{W}_{\mathbf{u}_i}^{\mathbf{x}} \mathbf{x}_t + \mathbf{W}_{\mathbf{u}_i}^{\mathbf{y}} \mathbf{y}_{t-1}^{\ell-1} \right)$ and $\alpha_{it}=\sigma\left(\left(\mathbf{w}_{\alpha_i}^{\mathbf{x}}\right)^\top \mathbf{x}_t + (\mathbf{w}_{\alpha_i}^{\mathbf{y}})^\top \mathbf{y}_{t-1}^{\ell-1} + b_{\alpha_i}\right)$, where $\text{SiLU}(.)$ and $\sigma(.)$ are the SiLU and the sigmoid functions, respectively.
    \item \textbf{Householder Reflections:} we define similar to the diagonal plus low-rank variant.
    \item \textbf{Kronecker:} we define $\mathbf{D}_t^{\ell, n}=\text{diag}\left(\sigma\left(\mathbf{W}_{\mathbf{D}^n}^{\mathbf{x}} \mathbf{x}_t + \mathbf{W}_{\mathbf{D}^n}^{\mathbf{y}} \mathbf{y}_{t-1}^{\ell-1}+b_{\mathbf{D}^n}\right)\right)$ and $\mathbf{K}_t^n=\text{mat}\left(\text{SiLU}\left(\mathbf{W}_{\mathbf{K}^n}^{\mathbf{x}} \mathbf{x}_t + \mathbf{W}_{\mathbf{K}^n}^{\mathbf{y}} \mathbf{y}_{t-1}^{\ell-1}+b_{\mathbf{K}^n}\right)\right)$ for $n=1, 2$, where $\text{diag}(.)$ is the operator transforming a vector into a diagonal matrix, and $\text{mat}(.)$ is the operator transforming a size $d$ vector into a $\sqrt{d}\times \sqrt{d}$ matrix.
\end{itemize}
For the diagonal plus low rank and the Householder reflections mixers, we L2 normalize the vectors $\mathbf{u}_{it}$ to achieve the unit vector formulation. Note that this does not guarantee a contractive diagonal plus low rank structure, which is why the first variant of the channel mixers are excluded form our FP-RNN experiments. For the Kronecker variant, we define the matrices $\mathbf{K}_t^n$ as symmetric and positive semi-definite using the Cholesky decompositon structure, and normalize them by their largest eigenvalues. The largest eigenvalue is found using the power iterations method, which we found to be much more efficient for small-scale matrices compared to the functions in the PyTorch framework provided for this purpose.

In all of these parameterization, computing a matrix vector product for each fixed-point iteration can be performed in subquadratic time. Specifically, for the DPLR and the Householder formulation, the computation can be performed in linear time in state-size, while in the kronecker product variant, it can be performed in $\sqrt{d}\times \sqrt{d}$ for $d$ state-size.

\subsection{Dependence on $\mathbf{H}_{t-1}$ in theory} \label{app:hidden-dependence-in-theory}

We hypothesize that the dependence of the matrices $\boldsymbol{\lambda}_t$, $\mathbf{b}_t$, $\mathbf{c}_t$, and $\mathbf{Q}_t$ may provide a mechanism for the model to retain and manipulate positional information over the sequence. \citet{DBLP:conf/icml/JelassiBKM24} and \citet{DBLP:journals/corr/abs-2410-11135} show that position embeddings could play a crucial role in copy tasks by acting similar to hashing keys in a hashing table. We extend their mechanistic approach to understand why two-layers of linear attention could need $\mathbf{H}_{t-1}^{\ell-1}$ to generate appropriate position embeddings for the hashing mechanism. 

Specifically consider $\mathbf{y}_t^\top = \mathbf{c}_t^\top \mathbf{H}_t$ with $\mathbf{H}_t = \mathbf{H}_{t-1} + \mathbf{b}_t \mathbf{x}_t^\top$, assuming that a linear RNN with matrix-state can express linear attention by setting $\boldsymbol{\lambda}_t\approx\mathbf{1}~\forall t$. Upon receiving an input sequence $\{\mathbf{x}_1, \mathbf{x}_2, \dots, \mathbf{x}_\delta\}$ of length  $\delta$ followed by a delimiter element $\mathbf{x}_{\mathbf{s}}$, the model is expected to copy the input sequence autoregressively, i.e. to start producing $\{\mathbf{x}_1, \mathbf{x}_2, \dots, \mathbf{x}_\delta\}$ at output positions $\delta+1$ to $2\delta$. Following~\citet{DBLP:conf/icml/AroraEZTA0RR24}, the second layer could use position embeddings as hashing keys to detect and copy each token. More concretely, if the first layer receives a sequence $\{\mathbf{x}_1, \mathbf{x}_2, \dots, \mathbf{x}_\delta, \mathbf{x}_{\mathbf{s}}, \mathbf{x}_1, \mathbf{x}_2, \dots, \mathbf{x}_{\delta-1}\}$ of size $2\delta$ and augments it with shifted position embeddings $\{\mathbf{p}_i\}_{i=1}^{\delta}$ to produce the hidden sequence $\{\mathbf{x}_1+\mathbf{p}_1, \mathbf{x}_2+\mathbf{p}_2, \dots, \mathbf{x}_\delta+\mathbf{p}_\delta, \mathbf{x}_{\mathbf{s}}+\mathbf{p}_1, \mathbf{x}_1+\mathbf{p}_2, \dots, \mathbf{x}_{\delta-1} + \mathbf{p}_{\delta}\}$, then a second layer can act as a linear transformer and produce the sequence $\{\mathbf{x}_1, \mathbf{x}_2, \dots, \mathbf{x}_\delta\}$ at output positions $\delta+1$ to $2\delta$. In the following, we focus on the conditions for the first layer to produce the shifted position embeddings.

We start by assuming that the first layer has a skip-connection $\mathbf{y}_t^\top = \mathbf{c}_t^\top \mathbf{H}_t +  \mathbf{x}_t^\top$. In this case, the model can augment the inputs with positional embeddings $\{\mathbf{p}_i \}_{i=1}^{\delta}$ if it is able to produce shifted encodings $\mathbf{p}_{t-\delta} = \mathbf{p}_t$ for $\delta < t$ using $\mathbf{p}_t^\top = \mathbf{c}_t^\top \mathbf{H}_t$. This condition can be unrolled as
\begin{align*}
    \mathbf{p}_{t-\delta}^\top = 
    \mathbf{c}_{t-\delta}^\top \mathbf{H}_{t-\delta} 
    \overset{!}{=} \mathbf{c}_{t}^\top \mathbf{H}_{t-\delta}
    + \mathbf{c}_{t}^\top \sum_{\smash{\tau=t-\delta+1}}^{t}\mathbf{b}_\tau \mathbf{x}_\tau^\top
    = \mathbf{p}_t^\top
    && \forall \delta < t.
\end{align*}
and is satisfied if the equations
\begin{align*}
    \mathbf{c}_{t-\delta}^\top \mathbf{H}_{t-\delta} 
    \overset{!}{=} \mathbf{c}_{t}^\top \mathbf{H}_{t-\delta}
    &&\text{and}&&
    %\mathbf{C}_{t-\delta}^\top h_{t-\delta} 
    %\overset{!}{=} \mathbf{C}_{t}^\top h_{t-\delta} 
    \mathbf{c}_{t}^\top \sum_{\tau=t-\delta+1}^{t-1}\mathbf{b}_\tau \mathbf{x}_\tau^\top
    \overset{!}{=} -\mathbf{c}_t^\top \mathbf{b}_t \mathbf{x}_t^\top
\end{align*}
hold. Such conditions could only be true if $\mathbf{b}_t$ and $\mathbf{c}_t$ are a function of the previous hidden state $\mathbf{H}_{t-1}$ because they need to be able to retain information about $\{\mathbf{x}_i\}_{i=t-\delta+1}^{t-1}$. While not an explicit mechanism for copying, this derivation provides insight into why a dependency on $\mathbf{H}_{t-1}$ could be helpful.

\newpage
\section{Evaluation} \label{app:evaluation}

\subsection{Task Descriptions}
In this section, we provide task descriptions for the tasks used in the main text. 

\paragraph{State Tracking} The task of tracking state in the alternating group on five elements ($A_5$) is one of the tasks introduced in~\citep{DBLP:conf/icml/MerrillPS24} to show that linear RNNs and SSMs cannot solve state-tracking problems. $A_5$ is the simplest subset of $S_5$, the word problem involving tracking the permutation of five elements. In these tasks, a model is presented with an initial state and a sequence of permutations. As the output, the model is expected to predict the state that results from applying the permutations to the initial state. Solving these task with an RNN requires either a dense transition matrix or the presence of non-linearity in the recurrence. It is therefore a good proxy to verify the state-tracking ability of FP-Mamba. In order to investigate the out-of-distribution generalization ability of the model, we train the model with a smaller train sequence length and evaluate for larger (more than $\times3$) sequence lengths.

\paragraph{Copying} We use the copy task \citep{DBLP:conf/icml/JelassiBKM24} in order to assess the memory capabilities of FP-Mamba. In this task, the model is presented with a fixed-size sequence of elements, and expected to copy a subsequence of it after receiving a special token signaling the start of the copying process. In order to investigate the out-of-distribution generalization ability of the model, we train the models with sequence length $<50$, and assess the $\times 2$ length generalization following~\citet{DBLP:conf/icml/JelassiBKM24} and \citet{DBLP:journals/corr/abs-2410-11135}.

\subsection{Experimental Details}\label{app:evaluation-detals}

In this section, we will provide our experiment setup for the state tracking, copying, and mod arithmetic tasks. The code is available at~\href{https://github.com/dr-faustus/fp-rnn}{github.com/dr-faustus/fp-rnn}.

\paragraph{State tracking.} We train all models for $5$ epochs, with a batch size of $512$, $3$ different random seeds, learning rate set to $0.0001$, weight decay set to $0.01$, gradient clipping $1.0$, and the AdamW optimizer~\citep{loshchilov2017decoupled}. For the train data, we sample $16$M datapoints from all the possible permutations for a sequence length of $16$, and split the data with a ratio of $4$ to $1$ for train and validation samples. For the test data, we sample $500$k sequences of length $50$. We use the implementation and the hyperparameters provided by~\citet{DBLP:conf/icml/MerrillPS24} both for data generation and train/test. We train the model for sequence length $16$ on the train sample, and evaluate for sequence lengths $2$ through $50$ on the test sample. Consequently, each epoch of training consists of $25428$ iterations, making the total number of iterations during training to be around 1.25M. Note that the likelihood of overlap between the train and test samples is negligible since exhaustive generation of samples in $S_5$ and $A_5$ at sequence length $k$ would amount to $60^k$ and $30^k$, respectively.

\paragraph{Copying.} We train all models for $10000$ iterations, batch size $128$, $3$ different random seeds, learning rate $0.00001$, weight decay $0.1$, gradient clipping $1.0$, the AdamW optimizer, and with linear learning rate decay after a $300$ iterations warmup. The data is sampled randomly at the start of the training/evaluation. We use a vocab size of $29$, a context length of $256$, and train the model for copy sequence length in the range $5$ to $50$, and evaluate for the range $5$ to $100$. we use the implementation and the hyperparameters provided by~\citet{DBLP:conf/icml/JelassiBKM24}.

\paragraph{Mod arithmetic.} Our models are trained for $100000$ iterations, batch size $256$, learning rate $0.001$, weight decay $0.1$, and no gradient clipping. The learning rate is decayed using a cosine scheduling by a factor of $0.001$ after $10000$ iterations of warmup. The data is randomly sampled at the start of training/evaluation. We use a vocab size of $12$, with context length $256$, and train data sequence length in the range $3$ to $40$, and the test/evaluation data in the range $40$ to $256$. We use the implementation and the hyperparameters provided by~\citet{DBLP:conf/nips/BeckPSAP0KBH24} and~\citet{DBLP:journals/corr/abs-2411-12537}, which are the same hyperparameters used for training and evaluating the baselines.

\paragraph{Language Modeling.} For the language modeling task, we use the implmentation provided by~\citet{ajroldi2024plainlm}. We use a batchsize of $16\times 4\times 4=256$, training on $4$ A100-80GB GPUs with $4$ accumulation steps, which is the batchsize used in the 2.5B setting in~\citep{DBLP:journals/corr/abs-2312-00752}. The learning rate is optimized for the Mamba model ($0.004$) and train all models with this learning rate, with cosine warmup with $0.1$ steps. We use the AdamW optimizer with weight decay set to $0.1$ and $\beta_1, \beta_2$ set to $0.9, 0.95$. 

\paragraph{Training Time on  $A_5$.} In order to compare the proposed model to the baselines in terms of computation time, we train all of the baselines and our proposed model using the same hardware (A100-80GB gpus) on the $A_5$ task. We present the results in  Fig.~\ref{fig:generalization-vs-train-time}. Our Fixed-Point Mamba is trained at different maximum number of fixed-point iterations: between $2$ (green) and $16$ (blue), or sampled from the Gamma distribution $\Gamma(4,1)$ with mean $4$ (gray).

\paragraph{catbAbI} In this experiment, we use the setting provided by~\citet{Schlag2021FastWeightMemory}. We optimize the learning rates on Mamba, and use the same learning rate to train FP-Mamba, which we found to be $5\times 10^{-4}$. We use a batch size of $256$, along with short convolutions, and $1$, $2$, or $4$ layers. We set the maximum number of iterations $\ell_{\text{max}}$ to $100$.

\subsection{Heuristics to reduce the number of fixed-point iterations}

Given the importance of scalability in current machine learning research, an implicit network needs to be as efficiently designed and implemented as possible. While our theoretical framework improves upon the memory and computational requirements on the backward pass, the forward, and especially finding the fixed-point through fixed-point iterations needs further consideration. In our preliminary experiments, we discover two heuristics that can help with improving this aspect significantly.

The first heuristic is relaxing our definition of convergence to the fixed-point during training. We observe that the number of iterations required to find the fixed-point for the sequences in the model usually has a power-law distribution, with certain outliers in each batch elongating the convergence time. In our experiments, we notice very little difference in the performance of the converged model when we exclude these sequences from our stopping criterion. Consequently, during training, we continue the fixed-point iterations procedure until a certain percentage of the datapoints in the batch (usually set to 75\%) satisfy our criteria for convergence.

The second heuristic involves using a momentum-like update rule to accelerate the convergence of fixed-point iterations for certain sequences. Specifically, we observe that by setting the fixed-point update rule to $\mathbf{h}^{\ell + 1}=\delta \cdot f_{\boldsymbol{\theta}}(\mathbf{x}, \mathbf{h}^{\ell})+ (1-\delta)\cdot \mathbf{h}^{\ell}$ for some $\delta \in [0, 1]$, we can accelerate the convergence for certain sequences that are particularly slow to converge. Since this update rule can result in a biased approximation of the fixed-point, we implement a patience-based system that starts with $\delta=1$, and reduces the value of $\delta$ exponentially when the residues fail to improve.

\newpage
\section{Additional Experimental Results} \label{app:additional-experimental-results}

\subsection{Language Modeling} \label{app:experiment-language-modeling}

\begin{figure}[h!]
    \vskip -0.07in
    \centering
    \centerline{\includegraphics[width=0.8\linewidth]{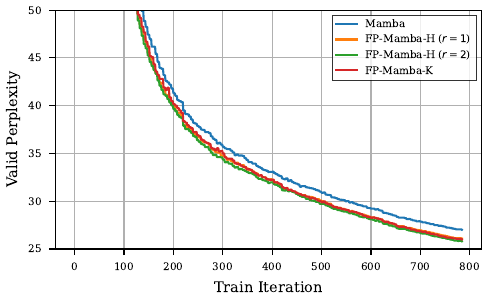}}
    \caption{\small\textit{
        The validation perplexity of the Mamba model vs. FP-Mamba-K and FP-Mamba-H with $r\in\{1, 2\}$ reflections. Note that all of the hyperparameters of the models are identical for fair comparison.
    }}\label{lm-learning-curve.pdf}
\end{figure}

In order to confirm the utility of the fixed-point framework in non-state-tracking settings, we performed an experiment on language modeling. Specifically, we compare the performance of a Mamba with an FP-Mamba, with the same hidden size ($768$) and number of layers ($12$). The settings are selected according to the 2.5B setup introduced in~\citet{DBLP:journals/corr/abs-2312-00752}. We use a train subsample of the FineWeb dataset~\citep{penedo2024finewebdatasetsdecantingweb} with 2B tokens, and a validation subsample with 200K tokens. We use a context length of $2048$ for our experiment. For the FP-Mamba model, we use the Householder mixer with $1$ and $2$ reflections. We report the validation perplexity in Fig.~\ref{lm-learning-curve.pdf}.

As we can observe, the fixed-point framework does introduce a significant improvement to the performance of the model on perplexity. However, we note that this improvement cannot be only attributed to the multi-layer hypothesis of implicit models~\citep{DBLP:conf/icml/GiannouRS0LP23}, as increasing the number of Householder reflections does seem to be improving the perplexity further. Furthermore, we point out the practicality of the setup, as we can observe in Fig.~\ref{fig:fp-iters-lm.pdf} that in the absence of a state-tracking problem, the number of fixed-point iterations seems to be independent of the sequence length, and instead hover in the $<10$ range. Finally, fixed-point iterations are not required in the backward bass and therefore only increase training time moderately.

\subsection{Long-Range State-Tracking} \label{app:long-range-state-tracking}

In this section, we investigate the ability of our proposed method in doing state tracking on longer sequences. Specifically, we will use the $A_5$ and $S_5$ datasets and train on sequence length 128, while evaluating for sequence lengths in the range $[2, 512]$. We also implement the proposed Fixed-Point framework on Mamba2~\citep{DBLP:conf/icml/DaoG24}, and we compared our method to DeltaProduct~\citep{siems2025deltaproduct}. In Fig.~\ref{fig:seqlen128}, we plot the test accuracy for these one-layer models.

Comparing our results to DeltaProduct, we can see that the non-linearity introduced by the Fixed-Point dynamics allow for a slight improvement in the performance of the Householder products as the mixer components. Furthermore, we observe that the best performing mixer variant is still the Kroneckers model, which can successfully learn the state-tracking problem in all runs. Moreover, the FP-Mamba2 model demonsterates a better length generalization ability compared to FP-Mamba1, which we attribute to the improved underlying architecture used in the model. As shown in~\citep{DBLP:conf/icml/DaoG24}, Mamba2 has better recall capabilities, which can help with length generalization.

\begin{figure*}[h!]
    \centering
    \begin{subfigure}[t]{0.45\textwidth}
        \centering
        \includegraphics[width=\linewidth]{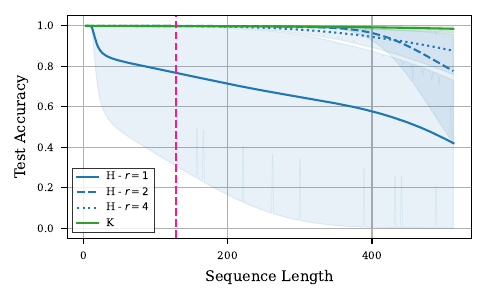}
        \caption{State Tracking on $A_5$ - FP-Mamba}
    \end{subfigure}%
    \begin{subfigure}[t]{0.45\textwidth}
        \centering
        \includegraphics[width=\linewidth]{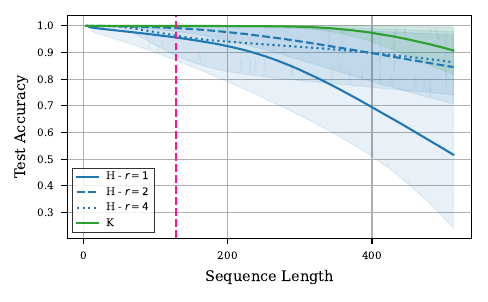}
        \caption{State Tracking on $S_5$ - FP-Mamba}
    \end{subfigure}
    \begin{subfigure}[t]{0.45\textwidth}
        \centering
        \includegraphics[width=\linewidth]{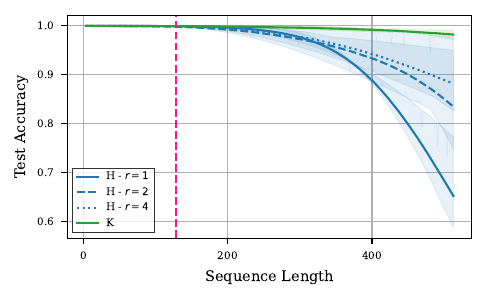}
        \caption{State Tracking on $A_5$ - FP-Mamba2}
    \end{subfigure}%
    \begin{subfigure}[t]{0.45\textwidth}
        \centering
        \includegraphics[width=\linewidth]{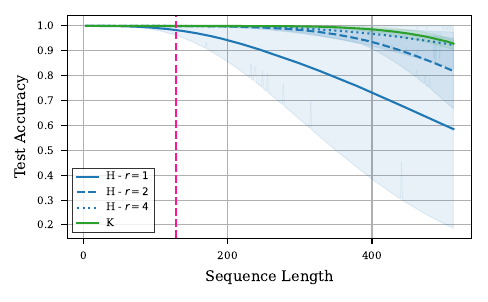}
        \caption{State Tracking on $S_5$ - FP-Mamba2}
    \end{subfigure}
    \begin{subfigure}[t]{0.45\textwidth}
        \centering
        \includegraphics[width=\linewidth]{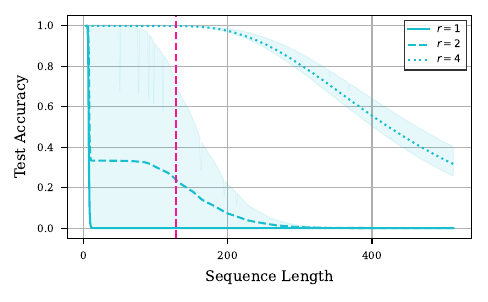}
        \caption{State Tracking on $A_5$ - DeltaProduct}
    \end{subfigure}%
    \begin{subfigure}[t]{0.45\textwidth}
        \centering
        \includegraphics[width=\linewidth]{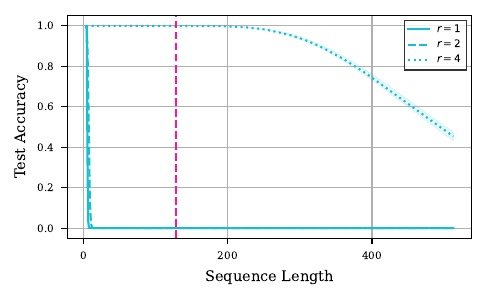}
        \caption{State Tracking on $S_5$ - DeltaProduct}
    \end{subfigure}
    \caption{\small\textit{
        The state-tracking experiment for train sequence length $128$ and evaluation sequence length $[2, 512]$. We omit the results of DPLR mixer due to poor performance. The figure presents \textbf{(a, b)} the results for FP-Mamba with Householder (\textbf{H}) and Kronecker (\textbf{K}) mixer, \textbf{(c, d)} the results for FP-Mamba2 with Householder (\textbf{H}) and Kronecker (\textbf{K}) mixer, and \textbf{(e, f)} the DeltaProduct method~\citep{siems2025deltaproduct} for Householder mixers.
    }} \label{fig:seqlen128}
\end{figure*}

\subsection{Reasoning on CatbAbI}

In order to investigate the state-tracking ability of the fixed-point framework in a natural language setting, we perform experiments on the catbAbI dataset~\citep{Schlag2021FastWeightMemory}. catbAbI (concatenated-bAbI) is a reprocessing of the bAbI QA benchmark~\citep{weston2015towards}, where individual bAbI stories are stitched into one long, continuous sequence, so models must keep track of state across story boundaries. The task tries to stress-test the long-range state tracking and associative inference capabilities of sequence models beyond short, isolated contexts. Each sample in this dataset is a short story. At the end of each story, the model needs to choose a single word that is the answer to the question corresponding to the story. The responses include yes/no responses and the names of characters or locations in the story. We present the results in Table~\ref{tab:catbabi}.

\begin{table}[h!]
    \centering
    \begin{tabular}{|l|c|c|c|c|c|}
        \hline
        \textbf{\# Layers} & \textbf{Mamba} & \textbf{FP-Mamba-K} & \textbf{FP-Mamba-H} & \textbf{FP-Mamba-H} & \textbf{FP-Mamba-H} \\
        & & & \textbf{($r=1$)} & \textbf{($r=2$)} & \textbf{($r=3$)} \\ 
        \hline
        1 Layer  & 78.28\% & 79.93\% & 81.32\% & 81.60\% & 80.79\% \\
        2 Layers & 87.08\% & 84.16\% & 89.08\% & 87.47\% & 89.55\% \\
        4 Layers & 86.51\% & \textemdash & \textemdash & \textemdash & \textemdash \\
        \hline
    \end{tabular}%
    \vspace{0.1in}
    \caption{\small\textit{
        Test accuracy of the Mamba model vs. the FP-Mamba model for the Kronecker (\textbf{K}) and the Householder (\textbf{H}) channel mixers with $r\in\{1, 2, 3\}$ on the catbAbI dataset. We increase the number of layers to show the effect of having more layers on all models. The task benefits from the fixed-point dynamic, but increasing the number of layers seems to be suffering from diminishing returns.
    }}
    \label{tab:catbabi}
    \vspace{-.1in}
\end{table}

In order to observe and compare the effect of more complex mixers with the number of layers, we use $1$, $2$, and $4$ layers along with the Kronecker and Householder mixer with $r\in\{1, 2, 3\}$ reflections. Our investigation shows that increasing the number of layers seems to be reaching the point of diminishing returns very fast, while the fixed-point framework improves the performance. This observation seems to be in line with the findings of~\citet{SaunshiDikkalaLiKumarReddi2025Looped}, where the looped architecture seems to be providing a very helpful inductive bias for solving reasoning tasks. Comparing the performance of mixers, we observe that the Kronecker mixer under-performs compared to the Householder mixer, which we believe is in line with our observation in App.~\ref{app:experiment-language-modeling}, where the Kronecker mixer underperforms on tasks involving natural languages.

\subsection{Modular Arithmetic Task Results} \label{app:mod-arithmetic-results}

Following~\citet{DBLP:journals/corr/abs-2411-12537}, we also evaluate FP-Mamba on the remaining unsolved task of the Chomsky Hierarchy of language problems introduced by~\citet{DBLP:conf/nips/BeckPSAP0KBH24}. Specifically, we focus on the mod arithmetic task with brackets. Following the setup of~\citet{DBLP:journals/corr/abs-2411-12537}, we train on sequence lengths $3$ to $40$ and report scaled accuracies  on test sequences of lengths $40$ to $256$. For FP-Mamba, we use a $2$-layer model with $r=4$ reflections, i.e. the best performing model in the $A_5$ experiment.
 
In Tab.~\ref{tab:mod-arithm}, we observe that a 2-layer FP-Mamba-H outperforms the baselines reported in \citep{DBLP:journals/corr/abs-2411-12537} with a comparable number of parameters. In Fig.~\ref{fig:mod-arith-max-iters}, we plot the validation accuracy as a function of the number of fixed-point iterations. We observe that the accuracy plateaus at $20$ iterations, which is significantly less than the shortest and longest sequence in the validation set. Therefore, the number of iterations required by FP-Mamba-H to reach its fixed point clearly does not scale with the sequence length in this task.

\begin{figure}[h!]
\centering
\begin{minipage}[b]{0.45\linewidth}
    \centering
    \begin{tabular}{|l|c|}
        \hline
        \textbf{Model} & \textbf{Accuracy} \\
        \hline
        2L Transformer & 0.025 \\
        2L mLSTM & 0.034 \\
        2L sLSTM & \textbf{0.173} \\
        \hline
        2L Mamba & 0.136 \\
        2L DeltaNet & 0.200 \\
        2L GatedDeltaProduct & \textbf{0.342} \\
        \hline
        2L FP-Mamba ($r=4$) & \textbf{0.384} \\
        \hline
    \end{tabular}
    \vspace{0.1in}
    \captionof{table}{\small\textit{
        The accuracy of various models on modular arithmetic with brackets. We adopt the reported numbers in~\citep{DBLP:journals/corr/abs-2411-12537} evaluating baselines the extended $[-1, 1]$ eigenvalue range. Scores are commonly used scaled accuracies between $1.0$ and $0.0$ (random guessing). Highlighted is the best performance in each category.
    }}\label{tab:mod-arithm}
\end{minipage}
\hspace{0.05\linewidth}
\begin{minipage}[b]{0.45\linewidth}
    \vskip 0.2in
    \center
    \includegraphics[width=0.95\linewidth]{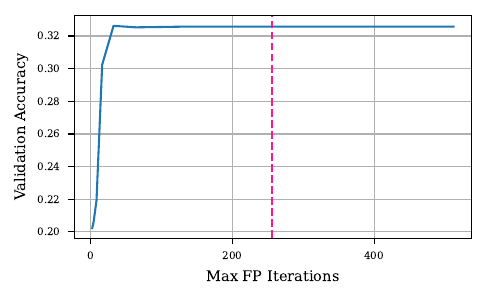}
    \vspace{-1.5mm}
    \caption{\small \textit{
        Number of fixed-point iterations on the modular arithmetic task at test time. We report the validation accuracy after convergence for the number of fixed-point iterations caped at various values ranging from $2$ to $512$. The pink dashed line denotes the maximum sequence length during validation.
    }} \label{fig:mod-arith-max-iters}
    \vskip -0.2in
\end{minipage}
\end{figure}

\newpage
\subsection{Effect of $\ell_{\text{max}}$ on test performance and number of iterations $\ell^*$}

\begin{figure*}[h!]
    \centering
    \begin{subfigure}[t]{0.45\textwidth}
        \centering
        \includegraphics[width=\linewidth]{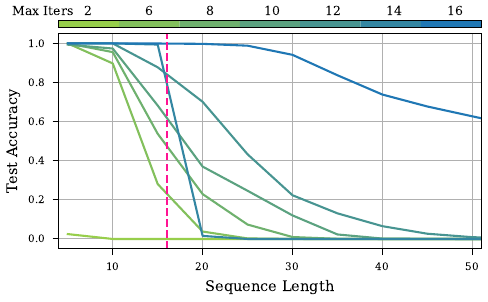}
        \caption{Effect of $\ell_{\text{max}}$}
    \end{subfigure}%
    \hspace{0.05\linewidth}
    \begin{subfigure}[t]{0.45\textwidth}
        \centering
        \includegraphics[width=\linewidth]{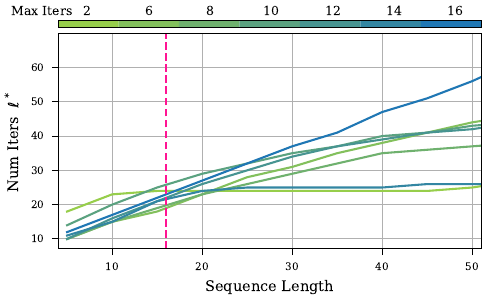}
        \caption{Number of Iterations $\ell^*$}
    \end{subfigure}
    \caption{\small\textit{
        The effect of $\ell_{\text{max}}$ on the performance of the model (\textbf{(a)}), and on the number of iterations $\ell^*$ (\textbf{(b)}) on the $A_5$ task. The vertical line denotes the train sequence length. All of the experiments are performed on FP-Mamba1 with a Householder mixer with $r=1$ reflections. Results are averaged across 4 runs.
    }} \label{fig:interpolation-lmax-lstar}
\end{figure*}

In Fig.~\ref{fig:interpolation-lmax-lstar} we present the effect of the maximum number of iterations $\ell_{\text{max}}$ during training on the accuracy and the number of iterations $\ell^*$ during inference. As we observe, the general trend is that increasing $\ell_{\text{max}}$ improves the performance of the model. We attribute this observation to how well the model learns the task, as following Thm.~\ref{thm:descent-dir}, a condition for the gradients being a descent direction is for them to be computed at or close to the fixed-point. Consequently, we can see that when trained with a smaller number of iterations (small $\ell_{\text{max}}$), the model fails to fully utilize the fixed-point by adapting $\ell^*$ to the difficulty of the task.

\subsection{Sequential vs Parallel Fixed-Point Iteration} \label{app:seq-vs-parallel}

An important detail about the fixed-point framework proposed in this paper is that it is not convex. Therefore, the fixed-point is not necessarily unique, which can be problematic in autoregressive applications because there are no guarantee that the parallel fixed-point during training will be the same as the sequential fixed-point used during inference~\citep{schöne2025implicitlanguagemodelsrnns}. In order to investigate this issue, we trained an FP-Mamba-H model on the $A_5$ task and compared the fixed-point computed sequentially and in parallel. We report the results in Fig.~\ref{fig:seq-vs-parallel}. We observe that the fixed-points are extremely similar, providing the possiblity of computing the fixed-point sequentially during inference.
\begin{figure}[h!]
    \centering
    \includegraphics[width=0.7\linewidth]{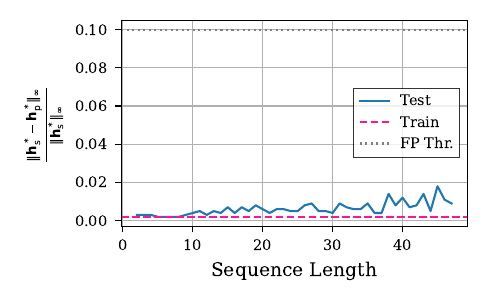}
    \caption{\small\textit{
        The difference between the fixed-point computed sequentially (i.e., computing the fixed-point for each token separately) and the fixed-point computed in parallel (i.e., computed through Eq.~\ref{eqn:fp-mamba-iteration}) on the $A_5$ task trained on sequence length $16$ to convergence. The x-axis denotes the test sequence length, and the y-axis the normalized difference. The dashed gray line denotes the threshold for stopping the fixed-point iterations.
    }} \label{fig:seq-vs-parallel}
    \vspace{-0.1in}
\end{figure}

\newpage
\section{Low-Rank Expressiveness} \label{app:low-rank-expressiveness-proof}

In this section, we prove that SSMs with low-rank structure can be maximally expressive under weak assumptions on the growth of the rank with hidden dimension.
To do this we first place ourselves in the general setting of \citep{DBLP:journals/corr/abs-2402-19047}, accordingly we consider models given by controlled differential equations of type\footnote{For simplicity we have omitted the $d\xi$ term, as the results and proof change minimally in form but not in spirit.}:

\begin{equation}
    \mathrm{d}Y_s = \sum_{i=1}^{d_\omega}A^iY_s\text{d}\omega^i_s, \quad Y_0 \in \mathbb{R}^{d_Y}
\end{equation}

Following the notation and methodology of~\citet{DBLP:journals/corr/abs-2402-19047}[B.4] ), this can be written in terms of the Signature as 
\begin{equation}\label{eqn:CDE_sig_series}
    \mathbf{Y}((A^i)_i, Y_0, \omega)_t := Y_t = \sum_{I \in \mathbb{W}_{d_\omega}} ( A^I Y_0 ) ~ S^{I}(\omega)_{[0, t]}
\end{equation}
where $\mathbb{W}_{d_\omega}$ is the set of words in the alphabet $[[d_\omega]] := \{1, \dots, d_\omega\} ~ $ (\emph{i.e.} $\mathbb{W}_{d_\omega} = \bigcup_{n\geq 0} [[d_\omega]]^n$ ) and for a given word $I = i_1 \dots i_n$ with $S^I(\omega)_{[0,t]}$ we refer to the $I$th component of the \emph{signature} tensor $S(\omega)_{[0,t]}$ \emph{i.e.}
\[
S^I(\omega)_{[0,t]} = \underbrace{\int\cdots\int}_{\substack{u_1<\cdots<u_n \\ u_i\in [0,t]}}\text{d}\omega^{i_1}_{u_1} \cdots \text{d}\omega^{i_n}_{u_n}.
\]

It follows directly from Eq.~\ref{eqn:CDE_sig_series} that any linear readout of  $Y_t$  can be represented as a series in signature terms. As a result, these systems are fundamentally restricted to learning functions that closely approximate these convergent series.

\emph{Maximal expressivity} is attained when \emph{any} finite linear combination of signature terms can be approximated by a linear readout on  $Y_t$  via suitable configurations of the matrices  $A^i$.

\begin{definition}
    Fix a set of paths $\mathcal{X} \subseteq C^{1-var}([0, 1]; \mathbb{R}^{d})$. We say that a sequence $(\mathcal{A}_N, \mathcal{Y}_N)_{N \in \mathbb{N}}$, where $\mathcal{Y}_N \subseteq \mathbb{R}^N$ and $\mathcal{A}_N \subseteq \mathbb{R}^{N \times N}$, achieves \emph{maximal expressivity} for $\mathcal{X}$ whenever for any positive tolerance $\epsilon > 0$ and any finite linear combination coefficients $\alpha \in T(\mathbb{R}^{d})$ there exist a choice of parameters $v, (A^i), Y_0$ in some $\mathbb{R}^N, \mathcal{A}_N, \mathcal{Y}_N$ in the sequence such that $v^\top \mathbf{Y}((A^i), Y_0, \omega)_\cdot$ is uniformly close to $\langle \alpha, S(\omega)_{[0,\cdot]} \rangle$ up to an error of $\epsilon$
    \emph{i.e.}
    \begin{equation*}
        \forall \epsilon > 0, ~ 
        \forall \alpha \in T(\mathbb{R}^{d}), ~ 
        \exists N \geq 0, ~
        \exists (v, (A^i), Y_0) \in \mathbb{R}^N \times \mathcal{A}_N^d \times \mathcal{Y}_N ~
        \text{s.t.} 
    \end{equation*}
    \begin{equation*}
        \sup_{(\omega, t) \in \mathcal{X} \times [0,1]} | \langle \alpha, S(\omega)_{[0,t]} \rangle  - v^\top \mathbf{Y}((A^i), Y_0, \omega)_t  | < \epsilon
    \end{equation*}

    If we are given a sequence of probabilities $\mathbb{P}_N$ on $\mathcal{A}_N^d \times \mathcal{Y}_N$ such that $\forall \epsilon > 0, ~ 
        \forall \alpha \in T(\mathbb{R}^{d})$ it holds that 
    \begin{equation}
        \lim\limits_{N \to \infty} \mathbb{P}_N\left\{
        \exists v \in \mathbb{R}^N ~
        \text{s.t.} 
        \sup_{(\omega, t) \in \mathcal{X} \times [0,1]} | \langle \alpha, S(\omega)_{[0,t]} \rangle  - v^\top \mathbf{Y}((A^i), Y_0, \omega)_t  | < \epsilon 
        \right\} = 1
    \end{equation}
    then we say that $(\mathcal{A}_N, \mathcal{Y}_N, \mathbb{P}_N)_{N \in \mathbb{N}}$ achieves \emph{maximal probabilistic expressivity} for $\mathcal{X}$.
\end{definition}

As discussed in the main body of this work in \citep{DBLP:journals/corr/abs-2402-19047} the authors prove that $(\mathbb{R}^{N \times N}, \mathbb{R}^{N}, \mathbb{P}_N)$, where $\mathbb{P}_N$ is a Gaussian measure corresponding to the classical \emph{Glorot} initialization scheme in deep learning, achieves \emph{maximal probabilistic expressivity} for compact sets.

Albeit expressiveness is thus maximally attained the resulting matrices $A_i$ are almost-surely dense, hence the models are not efficiently implementable.
As the next result suggests, a possible alternative is given by low-rank matrices:

\begin{proposition}
The sequence of triplets $(\mathbb{R}^{N \times N}, \mathbb{R}^{N}, \mathbb{P}_N)$ where $\mathbb{P}_N$ is such that 
    \begin{itemize}
        \item the initial value has independent standard Gaussian entries $[Y_0]_\alpha \iid \mathcal{N}(0,1)$,
        \item the weight matrices are distributed as $A^i \iid \frac{1}{\sqrt{N r_N}} W M^\top$ with $W$ and $M$ independent $N \times r_N$ matrices having entries $[W]_{\alpha, \beta}, [M]_{\alpha, \beta} \iid \mathcal{N}(0,1)$,
        \item the rank parameter $r_N$ satisfies $r_N \to \infty$ as $N \to \infty$
    \end{itemize}
    achieves \emph{maximal probabilistic expressivity} for compact sets.
\end{proposition}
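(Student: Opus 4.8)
The plan is to reduce the low-rank case to the dense Gaussian case already established in~\citep{DBLP:journals/corr/abs-2402-19047}. The key observation is that the signature-series representation in Eq.~\ref{eqn:CDE_sig_series} means maximal expressivity is, roughly, a statement about the joint distribution of the finite family of matrix products $\{A^I Y_0 : I \in \mathbb{W}_{d_\omega}, |I| \le n\}$ for each truncation level $n$: if for every $n$ this family can, with probability tending to $1$, be made to contain (after a linear readout $v$) any prescribed collection of vectors up to $\epsilon$, then maximal probabilistic expressivity holds. So the first step is to isolate the precise finite-dimensional ``richness'' condition on the laws of $(A^i, Y_0)$ that the proof in~\citep{DBLP:journals/corr/abs-2402-19047} actually uses — essentially a non-degeneracy/support condition on the distribution of the tuple of iterated products acting on $Y_0$ — and then verify it for the low-rank ensemble.

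The second and main step is a concentration/CLT argument for products of the rescaled low-rank matrices $A^i = \tfrac{1}{\sqrt{N r_N}} W_i M_i^\top$. The heuristic is that $\tfrac{1}{N} M_i^\top W_j$ is an $r_N \times r_N$ matrix whose entries are normalized inner products of independent Gaussian vectors in $\mathbb{R}^N$; as $N \to \infty$ these converge (entrywise, and jointly) to independent Gaussians, with the off-``rank-block'' cross terms vanishing. Consequently a product $A^{i_1} \cdots A^{i_k} Y_0$ telescopes into $\tfrac{1}{(\sqrt{N r_N})^k} W_{i_1} (M_{i_1}^\top W_{i_2}) \cdots (M_{i_{k-1}}^\top W_{i_k})(M_{i_k}^\top Y_0)$, and the middle factors $\tfrac{1}{N} M_{i_j}^\top W_{i_{j+1}} \to G_j$, a fresh $r_N \times r_N$ Gaussian matrix in the limit, while $\tfrac{1}{N} M_{i_k}^\top Y_0 \to g_k \in \mathbb{R}^{r_N}$ Gaussian. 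As $r_N \to \infty$, these limiting objects carry enough independent randomness that the collection of output vectors $\{W_{i_1} G_1 \cdots G_{k-1} g_k\}_{I}$ spans — in the appropriate high-probability sense — a space large enough to realize, via the readout $v \in \mathbb{R}^N$, any target linear combination of signature coefficients. I would make this rigorous by (i) a union bound over the finitely many words $I$ of length $\le n$, (ii) standard Gaussian concentration (Hanson--Wright / operator-norm bounds for Gaussian matrices) to control all the $\tfrac{1}{N} M^\top W$ factors simultaneously with probability $\ge 1 - o(1)$, and (iii) an anti-concentration / full-rank statement for the random linear map $v \mapsto (v^\top A^I Y_0)_I$ showing it is surjective onto the finite target space once $N$ (hence $r_N$) is large — this is where $r_N \to \infty$ is essential, since a fixed rank would cap the number of independent directions available.

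The third step is routine bookkeeping: transfer the finite-truncation statement to the uniform-in-$(\omega,t)$-over-compact-sets statement exactly as in~\citep{DBLP:journals/corr/abs-2402-19047}, using that on a compact set $\mathcal{X} \subseteq C^{1\text{-var}}$ the signature tail $\sum_{|I|>n} \langle \alpha, S(\omega)_{[0,t]}\rangle$ is uniformly small (factorial decay of signature norms), and that $\alpha \in T(\mathbb{R}^d)$ has finite support so only finitely many words matter. I would then combine the high-probability events across the finitely many relevant words and pass to the limit to conclude maximal probabilistic expressivity.

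The hard part will be Step 2, and specifically controlling the \emph{joint} limiting law of all the iterated-product factors and proving the surjectivity/anti-concentration claim in (iii): one must ensure that the accumulated dependence across a product $A^{i_1}\cdots A^{i_k}$ does not collapse the effective dimension, and that the rescaling $1/\sqrt{Nr_N}$ is exactly the one that keeps these products $O(1)$ and non-degenerate rather than exploding or vanishing. Getting the interplay between $N \to \infty$ and $r_N \to \infty$ right — so that the $r_N$-dimensional ``bottleneck'' blocks behave like genuine Gaussian matrices in the limit while still growing without bound — is the crux; the rest follows the template of~\citep{DBLP:journals/corr/abs-2402-19047} fairly mechanically.
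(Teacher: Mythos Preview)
Your Step-2 programme has a real gap. The telescoping heuristic --- that the inner blocks $M_{i_j}^\top W_{i_{j+1}}$ behave like fresh, independent Gaussian matrices --- breaks the moment a word has a repeated letter: for $I=ii$ the product is $\tfrac{1}{N r_N}\,W_i(M_i^\top W_i)M_i^\top$, and the inner block $M_i^\top W_i$ shares its randomness with the outer factors $W_i$, $M_i^\top$. Across the family $\{A^I Y_0\}_{|I|\le n}$ these correlations are combinatorially intricate, and your surjectivity claim (iii) --- which is exactly the content to be proved --- does not follow from Hanson--Wright on the individual blocks. A secondary issue: the limit statements ``$\tfrac{1}{N} M^\top W \to G$, a fresh $r_N\times r_N$ Gaussian'' are not well-posed, since the target dimension moves with $N$, and the normalization is off (entries of $M^\top W$ have variance $N$, so $\tfrac{1}{\sqrt N}$ is the Gaussian scaling, not $\tfrac{1}{N}$).

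The paper bypasses all of this. Citing~\citep{DBLP:journals/corr/abs-2402-19047}[B.3.5], it reduces the proposition to the single scalar moment bound
\[
\Bigl\| \tfrac{1}{N}\sprod{A^I Y_0}{A^J Y_0}_{\mathbb{R}^N} - \delta_{I,J} \Bigr\|_{L^2(\mathbb{P}_N)} \;\le\; \bigl(\kappa(|I|+|J|)\bigr)!!\;o(1),
\]
i.e.\ asymptotic $L^2$-orthonormality of the iterated-product vectors --- this is precisely the ``richness'' condition you were looking to isolate in Step~1, and once it holds the rest is already done in the cited reference. The bound is then established by a direct Wick/graphical expansion in the framework of~\citep{cirone2024graphexpansionsdeepneural}, extended to rectangular matrices: $\tfrac{1}{N}\sprod{A^I Y_0}{A^J Y_0}$ is encoded as a ladder product-graph $G_{I,J}$ on $2|I|+2|J|$ edges, half of whose interior vertices carry the bottleneck dimension $r_N$; computing the second moment amounts to counting admissible pairings of $G_{I,J}\sqcup G_{I,J}$, and the scaling $N^{-1}(Nr_N)^{-(|I|+|J|)/2}$ together with $r_N\to\infty$ forces every non-identity pairing to be $O\bigl(\tfrac{1}{N\wedge r_N}\bigr)$, giving the bound with $\kappa=4$ and $o(1)=O\bigl(\tfrac{1}{\sqrt{N\wedge r_N}}\bigr)$. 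The Wick-pairing calculus handles repeated letters and all cross-correlations automatically, and only scalar quantities are taken to a limit, so neither of the difficulties in your Step~2 ever arises.
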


\begin{proof}
    Following \citep{DBLP:journals/corr/abs-2402-19047}[B.3.5] we only need to prove a bound of type
    \begin{equation}
    \label{eqn:bound_o1}
        \norm{\frac{1}{N} \sprod{A_I Y_0}{A_JY_0}_{\mathbb{R}^N} - \delta_{I,J}}_{L^2(\mathbb{P}_N)} \leq ( \kappa (|I| + |J|) )!! ~ o(1)
    \end{equation}
    as in the full-rank Gaussian case. 

    We will place ourselves in the graphical setting of \citep{cirone2024graphexpansionsdeepneural} and leverage the fact that (\emph{c.f.} \citep{cirone2024graphexpansionsdeepneural}[7.1]) their results and techniques naturally hold for rectangular matrices. 

    In our setting $\frac{1}{N} \sprod{A_I Y_0}{A_JY_0}_{\mathbb{R}^N}$ corresponds to a \emph{product graph} $G_{I,J}$ corresponding to a ladder having $2|I| + 2|J|$ edges as shown in Fig.~\ref{fig:ladder}.
    We can then use \citep{cirone2024graphexpansionsdeepneural}[Prop. 2] to compute the square of the $L^2$ norm in equation Eq.~\ref{eqn:bound_o1}, the only difference from the dense case is that half of the vertices (excluding the "middle" one) correspond to a space of dimension $r_N$ while the rest to the standard $N$.
        
    Since \( r_N \to \infty \) and given the scaling \( N^{-1} \bigl( N r_N \bigr)^{-\frac{|I| + |J|}{2}} \), the admissible pairings of \( G_{I,J} \) not of order \( o(1) \) are only the leading ones. These correspond to product graphs with \( \frac{|I| + |J|}{2} \) \( r_N \)-dimensional vertices and \( \frac{|I| + |J|}{2} + 1 \) \( N \)-dimensional vertices. By the same reasoning as in the full-rank case, these are found to be just the identity pairings.
    
    Moreover, all pairings of \( G_{I,J} \sqcup G_{I,J} \) that do not result in an identity pairing in at least one of the two copies are \( \mathcal{O} \bigl( \frac{1}{N \wedge r_N} \bigr) \) ( instead of \( \mathcal{O} \bigl( \frac{1}{N} \bigr) \) ). This follows as in the full-rank case.
    
    Since the total number of admissible pairings of \( G_{I,J} \sqcup G_{I,J} \) is \( \bigl( 4 (|I| + |J|) \bigr)!! \), we conclude that equation~\ref{eqn:bound_o1} holds with \( \kappa = 4 \) and \( o(1) := \mathcal{O} \bigl( \frac{1}{\sqrt{N \wedge r_N}} \bigr) \).
\end{proof}

\begin{figure*}[h!]
    \centering
    \includegraphics[width=0.9\linewidth]{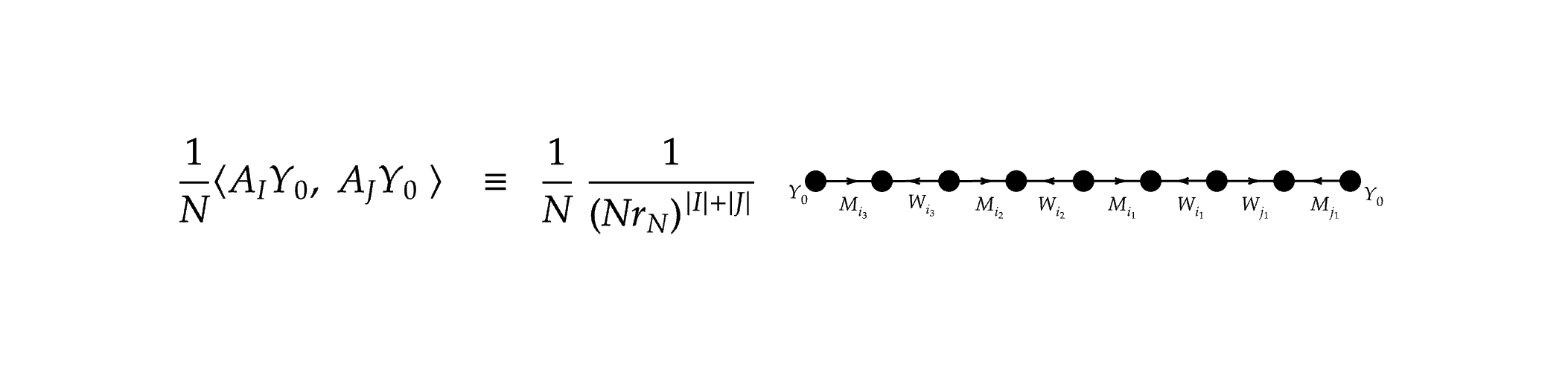}
    \caption{The \emph{product graph} $G_{I,J}$ for $I = i_1i_2i_3$ and $J = j_1$.}
    \label{fig:ladder}
\end{figure*}

\begin{remark}
    Following \citep{cirone2024graphexpansionsdeepneural}[6.1] it's possible to prove that the $W$ and $M$ can be taken as having iid entries from a centred, symmetric but heavy tailed distribution given finiteness of even moments.
    This distributional choice comes useful in controlling the eigenvalues of $A = WM^\top$.
\end{remark}

\begin{remark}
    While the proof crucially uses the assumption $r_N \to \infty$ as $N \to \infty$, at the same time we have not provided an argument against $r_N$ not diverging.
    In Fig.~\ref{fig:counterexample} we present a counterexample, showing that if $r_N$ does not diverge then the asymptotics differ from the dense ones, in particular some symmetries are "lost", impossible to recover due to unavoidable noise.
\end{remark}

\clearpage
\begin{figure}[h!]
    \centering
    \includegraphics[width=0.9\linewidth]{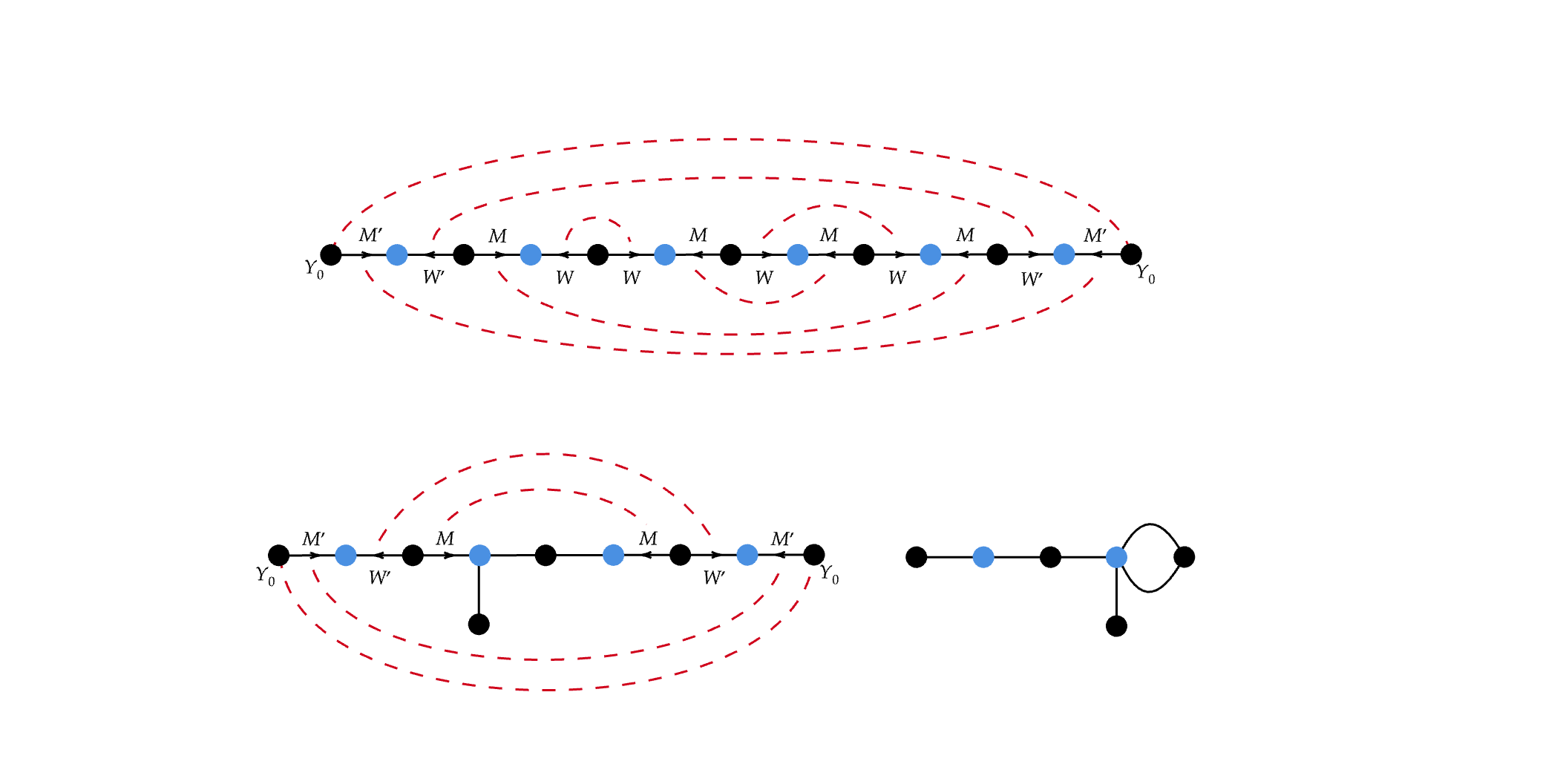}
    \caption{
    Admissible pairing different from the "identity" paring, but still leading to maximal asymptotic scaling in the bounded $r_N$ case. Here, \( I = 12 \neq 1112 = J\), and we have highlighted in \textcolor{blue}{blue} the vertices corresponding to the bounded dimension \( r_N \). Recall that edges without arrows correspond to the matrix \(\mathbf{I}\) (matrix of ones), and that two edges corresponding to matrices \( A \) and \( B \) which share direction and terminal vertices can be merged into the edge \( A \odot B \).
    }
    \label{fig:counterexample}
\end{figure}

\end{document}